\documentclass{article}
\usepackage[round,comma]{natbib}
\bibliographystyle{plainnat}

\usepackage{amssymb}
\usepackage{amsmath}
\usepackage{amsthm}
\usepackage{xspace}
\usepackage{anysize}
\usepackage{comment}
\usepackage{color}
\usepackage{float}
\usepackage{enumerate}
\usepackage[pdftex]{graphicx}

\marginsize{3cm}{4cm}{3cm}{4cm}
\usepackage{url}

\newtheorem{theorem}{Theorem}
\newtheorem{lemma}[theorem]{Lemma}

\newtheorem{corollary}[theorem]{Corollary}
\newcommand{\BlackBox}{\rule{1.5ex}{1.5ex}}  %

\newenvironment{keywords}
{\bgroup\leftskip 20pt\rightskip 20pt \small\noindent{\bf Keywords:} }%
{\par\egroup\vskip 0.25ex}

\newenvironment{myproof}[1]%
{\par\noindent{\bf #1\ }}%
{\hfill\BlackBox\\[2mm]}

\newcommand{\emcite}{\citet}
\renewcommand{\cite}{\citep}
\newcommand{\ifrac}[2]{#1/#2}  %

\newcommand{\hf}{f^R}
\newcommand{\h}{\frac{1}{2}}

\newcommand {\norm}[1]{\big\| #1 \big\|}
\newcommand {\smnorm}[1]{\| #1 \|}

\newcommand{\hnorm}[1]{\frac{1}{2}\norm{#1}}

\newcommand{\argmin}{\mathop{\rm arg\,min}}
\newcommand{\BO}{\mathcal{O}}

\newcommand {\set}[1]{\ensuremath{\left\{#1\right\}}}

\newcommand{\R}{\ensuremath{\mathbb{R}}}

\newcommand{\grad}{\triangledown}

\newcommand{\tp}{\top}  %

\newcommand{\invh}{^{-\h}}
\newcommand{\inv}{^{-1}}
\newcommand{\ti}{_{t+1}}

\newcommand{\Qa}{Q_a}
\newcommand{\Qb}{Q_b}
\newcommand{\Qtt}{Q_{1:t}}

\newcommand{\Rp}{\tilde{R}}
\newcommand{\Rtt}{R_{1:t}}

\newcommand{\gtt}{g_{1:t}}

\newcommand{\hx}{\hat{x}}
\newcommand{\rtt}{r_{1:t}}

\newcommand{\xs}{\mathring{x}} %
\newcommand{\xti}{x_{t+1}}
\newcommand{\xt}{x_t}
\newcommand{\Fs}{\mathcal{F}}
\newcommand{\IF}{I_\Fs}  %

\newcommand{\Regret}{\text{Regret}}
\newcommand{\Snp}{S^n_{+}}
\newcommand{\Snpp}{S^n_{++}}

\newcommand{\gpt}{g'_t}  %

\newcommand{\fln}{\bar{f}} %

\newcommand{\mge}{\succeq}

\newcommand{\Qab}{Q_{a:b}}
\newcommand{\Qah}{\Qa^\h}
\newcommand{\Qbh}{\Qb^\h}
\newcommand{\Qabh}{\Qab^\h}

\newcommand{\yc}{y}  %

\newcommand{\prox}{FTRL-Proximal\xspace}
\newcommand{\orgn}{RDA\xspace}
\newcommand{\fobos}{FOBOS\xspace}
\newcommand{\bs}{\bar{\sigma}}

\newcommand{\Brtt}{\Br_{1:t}}

\newcommand{\Br}{\mathcal{B}}  %

\newcommand{\hp}{\hat{\Psi}}

\newcommand{\hha}{h_1}
\newcommand{\hhb}{h_2}

\newcommand{\xa}{x_1}
\newcommand{\xb}{x_2}

\newcommand{\hpa}{\Phi_a}
\newcommand{\hpb}{\Phi_b}

\newcommand{\pxa}{x_1}
\newcommand{\pxb}{x_2}

\newcommand{\wh}{\tilde{h}}
\newcommand{\wx}{\tilde{x}}
\newcommand{\wg}{\tilde{g}}

\newcommand{\lh}{\bar{h}}
\newcommand{\lx}{\bar{x}}

\begin{document}

\title{A Unified View of Regularized Dual Averaging \\
and Mirror Descent with Implicit Updates
}
\author{H. Brendan McMahan \\
Google, Inc. \\
\texttt{\small{mcmahan@google.com}}
}
\maketitle

\begin{abstract}
  We study three families of online convex optimization algorithms:
  follow-the-proximally-regularized-leader (FTRL-Proximal),
  regularized dual averaging (RDA), and composite-objective mirror
  descent.  We first prove equivalence theorems that show all of these
  algorithms are instantiations of a general FTRL update.  This
  provides theoretical insight on previous experimental observations.
  In particular, even though the FOBOS composite mirror descent
  algorithm handles $L_1$ regularization explicitly, it has been
  observed that RDA is even more effective at producing sparsity.  Our
  results demonstrate that FOBOS uses subgradient approximations to
  the $L_1$ penalty from previous rounds, leading to less sparsity
  than RDA, which handles the cumulative penalty in closed form.  The
  FTRL-Proximal algorithm can be seen as a hybrid of these two, and
  outperforms both on a large, real-world dataset.
  
  Our second contribution is a unified analysis which produces regret
  bounds that match (up to logarithmic terms) or improve the best
  previously known bounds.  This analysis also extends these
  algorithms in two important ways: we support a more general type of
  composite objective and we analyze implicit updates, which replace
  the subgradient approximation of the current loss function with an
  exact optimization.
\end{abstract}

\vspace{0.1in}

\begin{keywords}
online learning, online convex optimization, subgradient methods, regret bounds, follow-the-leader algorithms
\end{keywords}

\section{Introduction}
We consider the problem of online convex optimization, and in
particular its application to online learning.  On each round
$t=1,\dots, T$, we must pick a point $\xt \in \R^n$.  A convex loss
function $f_t$ is then revealed, and we incur loss $f_t(\xt)$.  Our
regret at the end of $T$ rounds with respect to a comparator point
$\xs$ is
\begin{equation*}
\Regret \equiv \sum_{t=1}^T f_t (\xt) - \sum_{t=1}^T f_t(\xs).
\end{equation*}

In Section~\ref{sec:analysis} we provide a unified regret analysis of
three prominent algorithms for online convex optimization.
In recent years, these algorithms have received significant attention
because they have straightforward and efficient implementations and
offer state-of-the-art performance for many large-scale applications.
In particular, we consider:
\begin{itemize}
\item Follow-the-Proximally-Regularized-Leader (FTPRL), introduced
  with adaptive learning rates (regularization)
  by~\emcite{mcmahan10boundopt}.
\item Regularized Dual Averaging (RDA), introduced
  by~\emcite{xiao09dualaveraging} and extended with adaptive learning
  rates by~\emcite{duchi10adaptive}.
\item Composite-Objective Mirror Descent (COMID)
  algorithms~\cite{duchi10composite}, including
  FOBOS~\cite{duchi09fobos}.
\end{itemize}

As pointed out by~\emcite{duchi10composite}, the analyses of RDA and
COMID cited above are completely different.  In contrast, we provide a
unified analysis of these algorithms.
One of our contributions is simply demonstrating that this large and
important family of algorithms can be analyzed using a common
argument, but our analysis also generalizes previous results in
several important ways.
First, we extend all of these algorithm to handle implicit updates,
which replace the first-order approximation on the current loss
function with an exact optimization.  In many practical situations
this update can be solved efficiently, and offers both theoretical and
practical benefits compared to the first-order update.

We also extend the ability of these algorithms to handle composite
objectives (objectives that include a fixed non-smooth term $\Psi$).
Previous work considers loss functions on each round of the form
$f_t(x) + \Psi(x)$, where $f_t$ is approximated by a linear function,
but the optimization over $\Psi$ is exact.  However, as discussed
below, continuing to add a new copy of $\Psi(x)$ on each round may be
undesirable in some cases; to address this, we analyze loss functions
of the form $f_t(x) + \alpha_t \Psi(x)$ where $\alpha_t$ is a
non-increasing sequence of non-negative numbers.  This is useful, for
example, if one wishes to encode a Bayesian prior in the online
setting (see Section~\ref{sec:motivation}).  Our proof technique has
the advantage that handling this general form of composite updates
requires only a few extra lines beyond the non-composite proof.
The original analysis of FTPRL by~\emcite{mcmahan10boundopt} did not
support composite updates.  In addition to remedying this, we prove a
new stronger version of the ``FTRL/BTL Lemma'' which tightens the
analysis of FTPRL by a constant factor.  The new lemma is quite
general and may be of independent interest.

Our unified analysis relies on a formulation of all of these
algorithms as instances of follow-the-regularized-leader,
which we develop in Section~\ref{sec:equiv}.  A preliminary version of
these equivalence results appeared in~\cite{mcmahan10equiv}.  Our
equivalence theorems apply to algorithms that use arbitrary strongly
convex regularization; however, these results show that the most
interesting strict equivalences occur in the case of quadratic
regularization.  Thus, for the analysis of Section~\ref{sec:analysis}
we restrict attention to this case, namely to algorithms where the
incremental strong convexity is of the form
\[
  R_t(x) = \hnorm{Q_t^\h (x - y)}^2_2
\] 
where $y \in R^n$ and $Q_t$ is a positive-semidefinite matrix.  This is
less general than previous results in terms of arbitrary
strongly-convex functions or Bregman divergences.  

\paragraph{Application to Sparse Models via $L_1$ Regularization }
On the surface, follow-the-regularized-leader algorithms like
regularized dual averaging~\cite{xiao09dualaveraging} appear quite
different from gradient descent (and more generally, mirror descent)
style algorithms like FOBOS~\cite{duchi09fobos}. However, the results
of Section~\ref{sec:equiv} show that in the case of quadratic
stabilizing regularization there are only two differences
between the algorithms:
\begin{itemize} 
\item How they choose to center the additional strong convexity used
  to guarantee low regret: RDA centers this regularization at the
  origin, while FOBOS centers it at the current feasible point.

\item How they handle an arbitrary non-smooth regularization function
  $\Psi$.  This includes the mechanism of projection onto a feasible
  set and how $L_1$ regularization is handled.
\end{itemize}
To make these differences precise while also illustrating
that these families are actually closely related, we consider a third
algorithm, \prox.  When the non-smooth  term $\Psi$ is
omitted, this algorithm is in fact identical to \fobos.  On the other
hand, its update is essentially the same as that of dual averaging,
except that additional strong convexity is centered at the current
feasible point (see Table~\ref{table:algorithms}).

Previous work has shown experimentally that dual averaging with $L_1$
regularization is much more effective at introducing sparsity than
FOBOS~\cite{xiao09dualaveraging,duchi10adaptive}.  Our equivalence
theorems provide a theoretical explanation for this: while RDA
considers the cumulative $L_1$ penalty $t \lambda \smnorm{x}_1$ on round
$t$, FOBOS (when viewed as a global optimization using our equivalence
theorem) considers $\phi_{1:t-1} \cdot x +\lambda \smnorm{x}_1$, where
$\phi_s$ is a certain subgradient approximation of $\lambda
\smnorm{x_s}_1$ (we use $\phi_{1:t-1}$ as shorthand for
$\sum_{s=1}^{t-1} \phi_s$, and extend the notation to sums over
matrices and functions as needed).

An experimental comparison of \fobos, \orgn, and \prox, presented in
Section~\ref{sec:exp}, demonstrates the validity of the above
explanation.  The \prox algorithm behaves very similarly to \orgn in
terms of sparsity, confirming that it is the cumulative subgradient
approximation to the $L_1$ penalty that causes decreased sparsity in
FOBOS.

In recent years, online gradient descent and stochastic gradient
descent (its batch analogue) have proven themselves to be excellent
algorithms for large-scale machine learning.  In the simplest case
\prox is identical, but when $L_1$ or other non-smooth regularization
is needed, \prox significantly outperforms FOBOS, and can outperform
\orgn as well.  Since the implementations of \prox and \orgn only
differ by a few lines of code, we recommend trying both and picking
the one with the best performance in practice.

\begin{table*}[t!]
\begin{center}
\[
\begin{array}{rcc*{3}{l@{\qquad}}}
             &           &\  (A)                    &  \qquad (B)                                  &  \qquad (C)  \\
\vspace{2pt}\text{COMID} & \argmin_x & g'_{1:t-1} \cdot x + f_t(x) & + \ \ \phi_{1:t-1} \cdot x + \alpha_t \Psi(x)  &+ \frac{1}{2}\sum_{s=1}^t\smnorm{ Q_s^{\frac 1 2}(x  - x_s)}^2  \\
\vspace{2pt}\text{RDA}   & \argmin_x & g'_{1:t-1} \cdot x + f_t(x) & + \ \ \alpha_{1:t} \Psi(x)                    &+ \frac{1}{2}\sum_{s=1}^t\smnorm{ Q_s^{\frac 1 2}(x - 0)}^2     \\
\text{FTPRL} & \argmin_x & g'_{1:t-1} \cdot x + f_t(x) & + \ \ \alpha_{1:t} \Psi(x)                    &+ \frac{1}{2}\sum_{s=1}^t\smnorm{ Q_s^{\frac 1 2}(x  - x_s)}^2  \\
\text{AOGD}  & \argmin_x & g'_{1:t-1} \cdot x + f_t(x) & + \ \ \phi_{1:t-1} \cdot x + \Psi(x)  & + \frac{1}{2}\sum_{s=1}^t\smnorm{ Q_s^{\frac 1 2}(x - 0)}^2_2    \\
\end{array}                                                                    
\]                                                                             
  \caption{ The algorithms considered in this paper, expressed as
    particular instances of the update of Eq.~\eqref{eq:update}.  The
    fact that we can express COMID and adaptive online gradient
    descent (AOGD) in this way is a consequence of
    Theorems~\ref{thm:gd-is-proxftrl} and~\ref{thm:gen-gdfprime}.
    Each algorithms' objective has three components: (A) An
    approximation to the sum of previous loss functions $f_{1:t}$,
    where the first $t-1$ functions are approximated by linear terms,
    and $f_t$ is included exactly (exactly including $f_t$ make the
    updates implicit).  (B) Terms for the non-smooth composite terms
    $\alpha_t \Psi$.  COMID approximates the terms for $\alpha_{1:t-1}
    \Psi$ by subgradients, while RDA and FTPRL consider them exactly.
    And finally, (C), stabilizing regularization needed to ensure low
    regret.  }
\label{table:algorithms}
\end{center}
\end{table*}

\section{Algorithms and Regret Bounds}\label{sec:algs}
We begin by establishing notation and introducing more formally the
algorithms we consider.  We consider loss functions $f_t(x) + \alpha_t
\Psi(x)$, where $\Psi$ is a fixed (typically non-smooth)
regularization function.  In a typical online learning setting, given
an example $(\theta_t, y_t)$ where $\theta_t \in \R^n$ is a feature
vector and $y_t \in \{-1, 1\}$ is a label, we take $f_t(x) =
\text{loss}(\theta_t \cdot x, y_t)$.  For example, for logistic
regression we use log-loss, $\text{loss}(\theta_t \cdot x, y_t) =
\log(1 + \exp(-y_t \theta_t \cdot x)).$ 
All of the algorithms we
consider support composite updates (consideration of $\Psi$ explicitly
rather than through a gradient $\grad f_t(x_t)$) as well as positive
semi-definite matrix learning rates $Q$ which can be chosen adaptively
(the interpretation of these matrices as learning rates will be
clarified in Section~\ref{sec:equiv}).

We first consider the specific algorithms used in the $L_1$
experiments of Section~\ref{sec:exp}; we use the standard reduction to
linear functions, letting $g_t = \grad f_t(x_t)$.  The first algorithm
we consider is from the gradient-descent family, namely \fobos, which
plays
\begin{equation*}
\xti = \argmin_x g_t \cdot x + \lambda \norm{x}_1 + \frac{1}{2}
\norm{ Q_{1:t}^{\frac 1 2}(x - x_t)}^2_2.  
\end{equation*}
We state this algorithm implicitly as an optimization, but a
gradient-descent style closed-form update can also be
given~\citep{duchi09fobos}.  The algorithm was described in this
form as a specific composite-objective mirror descent (COMID)
algorithm by~\citet{duchi10composite}.

The regularized dual averaging (\orgn) algorithm of \citet{xiao09dualaveraging} plays
\begin{equation*}
\xti = \argmin_x  g_{1:t} \cdot x + t \lambda \norm{x}_1 
         + \frac{1}{2}\sum_{s=1}^t\norm{ Q_s^{\frac 1 2}(x - 0)}^2_2.
\end{equation*}
In contrast to \fobos, the RDA optimization is over the sum $g_{1:t}$
rather than just the most recent gradient $g_t$.  We will show (in
Theorem~\ref{thm:gen-gdfprime}) that when $\lambda = 0$ and the $f_t$
are not strongly convex, this algorithm is in fact equivalent to the
adaptive online gradient descent (AOGD) algorithm of
~\emcite{bartlett07adaptive}.

RDA is directly defined as a FTRL algorithm, and hence is also an
instance of the more general primal-dual algorithmic schema of
\citet{shwartz06repeated}; see also \citet{kakade09smoothness}.
However, these general results are not sufficient to prove the
original bounds for RDA, nor the versions here that extend to implicit
updates.

The \prox algorithm plays
\begin{equation*}
\xti = \argmin_x  g_{1:t} \cdot x + t \lambda \norm{x}_1 
   + \frac{1}{2}\sum_{s=1}^t\norm{ Q_s^{\frac 1 2}(x  - x_s)}^2_2.
\end{equation*}
This algorithm was introduced by~\emcite{mcmahan10boundopt}, but without
support for an explicit $\Psi$.

One of our principle contributions is showing the close connection
between all four of these algorithms; Table~\ref{table:algorithms}
summarizes the key results from Theorems~\ref{thm:gd-is-proxftrl} and
\ref{thm:gen-gdfprime}, writing AOGD and FOBOS in a form that makes
the relationship to \orgn and \prox explicit.

In our equivalence analysis, we will consider arbitrary convex
functions $R_t$ and $\Rp_t$ in place of the $\h \norm{Q_t^{\frac 1
    2}x}_2^2$ and $\h \norm{ Q_t^{\frac 1 2}(x - x_t)}^2_2$ that
appear here, as well as arbitrary convex $\Psi(x)$ in place of
$\lambda \smnorm{x}_1$.

\subsection{Implicit and Composite Updates for FTRL}
The algorithms we consider can be expressed as
follow-the-regularized-leader (FTRL) algorithms that perform implicit
and composite updates.  The standard subgradient FTRL algorithm uses
the update
\[
\xti = \argmin_x \left( \sum_{s=1}^{t} \grad f_s(x_s)\right) \cdot x 
  + R_{1:t}(x).
\]
In this update, each previous (potentially non-linear) loss function
$f_s$ is approximated by the gradient at $x_s$ (when $f_s$ is not
differentiable, we can use a subgradient at $x_s$ in place of the
gradient).  The functions $R_t$ are incremental regularization added
on each round; for example $R_{1:t}(x) = \sqrt{t}\smnorm{x}^2$ is a
standard choice, corresponding to regularized dual averaging.

Implicit update rules are usually defined for mirror descent
algorithms, but we can define an analogous update for FTRL:
\begin{equation*}
\xti = \argmin_x \left( \sum_{s=1}^{t-1} \grad f_s(x_{s+1})\right) \cdot x 
+ f_t(x) + R_{1:t}(x).
\end{equation*}
This update replaces the subgradient approximation of $f_t$ with the
possibly non-linear $f_t$.  Closed-form implicit updates for the
squared error case were derived by~\cite{kivinen94exponentiated}; the
term implicit updates was coined later~\cite{kivinen06pnorm}.  Our
formulation is similar to the online coordinate-dual-ascent algorithm
briefly mentioned by~\emcite{shwartz08mind}.  In general, computing the
implicit update might require solving an arbitrary convex optimization
problem (hence, the name implicit), however, in many useful
applications it can be computed in closed form or by optimizing a
one-dimensional problem.  We discuss the advantages of implicit
updates in Section~\ref{sec:motivation}.

Analysis of implicit updates has proved difficult.
\emcite{kulis10implicit} provide the only other regret bounds for
implicit updates that match those of the explicit-update versions.
While their analysis handles more general divergences, it only applies
to mirror-descent algorithms.  Our analysis handles composite
objectives and applies FTRL algorithms as well as mirror descent.  Our
analysis also quantifies the one-step improvement in the regret bound
obtained by the implicit update, showing the inequality is in fact
strict when the implicit update is non-trivial.

When $f_t$ is not differentiable, we use the update
\begin{equation}\label{eq:implicit_only}
\xti = \argmin_x g'_{1:t-1} \cdot x + f_t(x) + R_{1:t}(x),
\end{equation}
where $g'_t$ is a subgradient of $f_t$ at $\xti$ (that is, $g'_t \in
\partial f_t(\xti)$) such that
$ g'_{1:t-1} + g_t' + \grad \Rtt(\xti) = 0.$
The existence of such a subgradient is proved below, in Theorem~\ref{thm:std}.

In many applications, we have a fixed convex function $\Psi$ that we
also wish to include in the optimization, for example $\Psi(x) =
\smnorm{x}_1$ ($L_1$-regularization to induce sparsity) or the indicator
function on a feasible set $\Fs$ (see Section~\ref{sec:notation}).
While it is possible to approximate this function via subgradients as
well, when computationally feasible it is often better to handle
$\Psi$ directly.  For example, in the case where $\Psi(x) =
\smnorm{x}_1$, subgradient approximations will in general not lead to
sparse solutions.  In this case, closed-form updates for optimizations
including $\Psi$ are often possible, and produce much better
sparsity~\cite{xiao09dualaveraging,duchi09fobos}.  We can include such
a term directly in FTRL, giving the composite objective update
\begin{equation}\label{eq:composite_only}
\xti = \argmin_x \left( \sum_{s=1}^t \grad f_s(x_s)\right) \cdot x 
 + \alpha_{1:t} \Psi(x) + R_{1:t}(x),
\end{equation}
where $\alpha_t$ is the weight on $\Psi$ on round $t$.

This formulation, which allows for an arbitrary sequence of
non-negative, non-increasing $\alpha_t$'s, is more general than that
supported by the original analysis of COMID or RDA.  \citet[Sec
6.1]{xiao10dualavgjmlr} shows that RDA does allow a varying schedule
where $\alpha_{1:t} = c + 1/\sqrt{t}$ for a constant $c$, by
incorporating part of the $\Psi$ term in the regularization function
$R_t$; this is less general than our analysis, which allows the
schedule $\alpha_t$ to be chosen independently of the learning rate.

Finally, we can combine these ideas to define an implicit update with
a composite objective.  In the general case where $f_t$ is not
differentiable, we have the update
\begin{equation}\label{eq:update}
\xti = \argmin_x g'_{1:t-1} \cdot x + f_t(x) + \alpha_{1:t} \Psi(x) + R_{1:t}(x),
\end{equation}
where $g'_t \in \partial f_t(\xti)$ such that $\exists \phi_t \in \partial \Psi(\xti)$
where
$ g'_{1:t-1} + g'_t + \phi_t + \grad \Rtt(\xti) = 0.$
The existence of such a subgradient again follows from
Theorem~\ref{thm:std}.

\newcommand{\fw}{f^w} 
\newcommand{\fu}{f^u} 

It is worth noting that our analysis of implicit updates applies
immediately to standard first-order updates.  Let $\fw_t$ designate
the loss function provided by the \emph{world}, and let $\fu_t$ be the
loss function in the \emph{update} Eq.~\eqref{eq:update}.  Then we
recover the non-implicit algorithms by taking $\fu_t(x) \leftarrow
\grad \fw_t(x_t) \cdot x$.

\subsection{
Motivation for Implicit Updates and  Composite Objectives}
\label{sec:motivation}
Implicit updates offer a number of advantages over using a subgradient
approximation.  \emcite{kulis10implicit} discusses several important
examples.  They also observe that empirically, implicit updates
outperform or nearly outperform linearized updates, and show more
robustness to scaling of the data.

Learning problems that use importance weights on examples are also a
good candidate for implicit updates.  Importance weights can
be used to compress the training data, by replacing $n$ copies of an
example with one copy with weight $n$.  They also arise in active
learning algorithms~\cite{beygelzimer10agnostic} and situations where
the training and test distributions differ (covariate
shift, e.g.~\emcite{sugiyama08direct}).  Recent work has demonstrated
experimentally that implicit updates can significantly outperform
first-order updates both on importance weighted and standard learning
problems~\cite{karampatziakis10importance}.

The following simple examples demonstrates the intuition for these
improvements.  The key is that the linearization of $f_t$
over-estimates the decrease in loss under $f_t$ achieved by moving in
the direction $\grad f_t(x_t)$.  The farther $x_{t+1}$ is chosen from
$x_t$, and the more non-linear the $f_t$, the worse this approximation
can be.  Consider gradient descent in one dimension with $f_t(x) =
\h(x - 3)^2$ and $x_t = 2$.  Then $\grad f_t(2) = -1$, and if we
choose a learning rate $\eta_t > 1$, we will actually overshoot the
optimum for $f_t$ (such a learning rate could be indicated by the
theory if the feasible set is large, for example).  Implicit updates,
on the other hand, will never choose $\xti > 3$, rather $\xti
\rightarrow 3$ as $\eta_t \rightarrow \infty$.  Thus, we see implicit
updates can be significantly better behaved with large learning rates.
Note that an importance weight of $n$ is equivalent to multiplying the
learning rate by $n$, so when importance weights can be large, implicit
updates can be particularly beneficial.

The overshooting issue is even more pronounced with non-smooth
objectives, for example, $f_t(x) = g\cdot x + \smnorm{x}_1$.  A standard
gradient descent update will in general never set $x_{t+1} = 0$ despite
the $L_1$ regularization; handling the $L_1$ term via an implicit
update solves this problem.  This is exactly the insight that COMID
algorithms like FOBOS exploit; by analyzing general implicit updates,
we achieve an analysis of these algorithms while also supporting a
much larger class of updates.

When the functional form of the non-smooth component of the objective
(for example $\smnorm{x}_1$) is fixed across rounds, it is preferable to
perform an explicit optimization involving the total non-smooth
contribution $\alpha_{1:t}\Psi$ (RDA and FTPRL) rather than just the
round $t$ contribution $\alpha_t \Psi$ (COMID).  While RDA supports
this type of non-smooth objective, it requires the weight on $\Psi$ to
be fixed across rounds.  We generalize this to non-increasing
per-round contributions in this work.

Suppose one is performing online logistic regression, and believes a
priori that the coefficients have a Laplacian distribution.  Then,
$L_1$-penalized logistic regression corresponds to MAP
estimation~(e.g., \emcite{lee06efficient}); suppose the prior
corresponds to a total penalty of $\lambda \smnorm{x}_1$.  If the size
of the dataset $T$ is known in advance, then we can use $\alpha_t =
\lambda/T$, and by making multiple passes over the data, we will
converge to the MAP estimate.  However, in the online setting we will
in general not know $T$ in advance, and we may wish to use an online
algorithm for computational efficiency.  In this case, any fixed value
of $\alpha_t$ will correspond to strengthening the prior each time we
see a new example, which is undesirable.  With the generalized notion
of composite updates introduced here, this problem is overcome by
choosing $\alpha_1 \Psi(x) = \lambda \smnorm{x}_1$, and $\alpha_t = 0$ for $t
\geq 2$.  Thus, the fixed penalty on the coefficients is correctly
encoded, independent of $T$.

\subsection{Summary of Regret Bounds}
In Section~\ref{sec:analysis}, we analyze the update rule of
Equation~\eqref{eq:update} when
\begin{equation}\label{eq:quadreg}
R_t(x) = \frac{1}{2}\norm{ Q_t^{\frac 1 2}(x - y_t)}^2,
\end{equation}
where $\smnorm{\cdot} = \smnorm{\cdot}_2$ here and throughout.  The points
$y_t \in \R^n$ are the centers for the additional regularization added
on each round.  Choosing $y_t = 0$ leads to an analysis of RDA with
implicit updates, and choosing $y_t = x_t$ yields the
follow-the-proximally-regularized-leader algorithm with implicit
updates.  Using $y_t = x_t$ together with a modified choice of $f_t$
leads to composite-objective mirror descent (see
Section~\ref{sec:equiv}).

The generalized learning rates $Q_t$ can be chosen adaptively using
techniques from \emcite{mcmahan10boundopt} and
\emcite{duchi10adaptive}, which leads to improved regret bounds, as
well as algorithms that perform much better in
practice~\cite{streeter10conditioning}.  Since in this work we provide
suitable regret bounds in terms of arbitrary $Q_t$, the adaptive
techniques can be applied directly.  Doing so complicates the
exposition somewhat, and so for simplicity and easy of comparison to
previous results we state specific regret bounds for scalar learning
rates:
\begin{corollary}\label{cor:main}
  Let $\Psi$ be the indicator function on a feasible set $\Fs$, and
  let $D = \max_{a,b \in \Fs}\smnorm{a - b}$.  So that our bounds are
  comparable, suppose $\max_{a \in \Fs} \smnorm{a} = \frac{D}{2}$ (for
  example, if $\Fs$ is symmetric).  Let $f_t$ be a sequence of convex
  loss functions such that $\smnorm{\grad f_t(x)} \leq G$ for all $t$ and all
  $x \in \Fs$.  Then for FTPRL we set $x_t = y_t$ and have
  \[\Regret \leq DG\sqrt{2T}.\]
  Implicit-update mirror descent obtains the same bound.
  For regularized dual averaging we choose $y_t = 0$ for all $t$, and obtain
  \[\Regret \leq \frac{1}{2}DG\sqrt{2T} +  \frac{G D}{\sqrt{2}} \ln T + \BO(1).\]
\end{corollary}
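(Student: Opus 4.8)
The plan is to obtain Corollary~\ref{cor:main} by specializing the general regret bound for the update~\eqref{eq:update} with quadratic regularization~\eqref{eq:quadreg} (proved in Section~\ref{sec:analysis}) to scalar learning rates. Set $Q_t = \sigma_t I$ with $\sigma_t \ge 0$, so the matrix increments are positive semidefinite and $\Rtt$ is $\sigma_{1:t}$-strongly convex with respect to $\smnorm{\cdot}_2$. Since $\Psi = \IF$ forces $\xt, \xs \in \Fs$, every subgradient appearing in the general bound is evaluated at a point of $\Fs$ and hence has norm at most $G$, while $\smnorm{\xt - \xs}\le D$ and $\smnorm{\xs}\le D/2$. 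Feeding these in, the general bound reduces to ``comparator regularization $R_{1:T}(\xs)$ plus a stabilization sum,'' where for the proximal centering the stabilization sum has the clean form $\tfrac{G^2}{2}\sum_{t=1}^{T} 1/\sigma_{1:t}$. What remains is to choose the scalar schedule and bound a couple of elementary sums; the strict one-step improvement of the implicit update is nonnegative and is simply discarded here.

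For FTPRL, take $y_t = \xt$, so that $R_{1:T}(\xs) = \tfrac12\sum_t \sigma_t\smnorm{\xs - \xt}^2 \le \tfrac12 D^2\sigma_{1:T}$. Choosing the rate so that $\sigma_{1:t} = \tfrac{\sqrt 2\,G}{D}\sqrt t$ — equivalently $\sigma_t = \tfrac{\sqrt 2\,G}{D}(\sqrt t - \sqrt{t-1})$, which is nonnegative (indeed non-increasing) — and using $\sum_{t=1}^{T} t^{-1/2}\le 2\sqrt T$, the comparator term and the stabilization term each equal $\tfrac{\sqrt 2}{2}GD\sqrt T$, so $\Regret \le GD\sqrt{2T}$. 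For implicit-update mirror descent I would invoke the equivalence of Section~\ref{sec:equiv}: COMID/mirror descent is precisely the update~\eqref{eq:update} with $y_t = \xt$ and a modified loss, so the identical computation yields the same bound.

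For RDA, take $y_t = 0$. This \emph{improves} the comparator term to $R_{1:T}(\xs) = \tfrac12\sigma_{1:T}\smnorm{\xs}^2 \le \tfrac18 D^2\sigma_{1:T}$, but it \emph{weakens} the stabilization: because the round-$t$ regularization is centered at the origin rather than at the point just played, the general bound's stabilization term for this case exceeds $\tfrac{G^2}{2}\sum_t 1/\sigma_{1:t}$ by an extra term that I would bound by a harmonic-type sum $\sum_t \sigma_t/\sigma_{1:t-1}$, which for $\sigma_{1:t}\propto\sqrt t$ is comparable to the telescoping sum $\sum_t \ln(\sigma_{1:t}/\sigma_{1:t-1}) = \ln(\sigma_{1:T}/\sigma_1) = \BO(\ln T)$; the first round, where RDA plays $x_1 = 0$, is bounded crudely by $f_1(x_1) - f_1(\xs)\le \tfrac12 GD$ and absorbed into $\BO(1)$. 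Re-optimizing with $\sigma_{1:t} = \tfrac{2\sqrt 2\,G}{D}\sqrt t$ balances $\tfrac18 D^2\sigma_{1:T}$ against $\tfrac{G^2}{2}\sum_t 1/\sigma_{1:t}$ to produce the leading term $\tfrac12 GD\sqrt{2T}$, and carrying the residual term through with its constants gives the stated $\tfrac{GD}{\sqrt 2}\ln T + \BO(1)$.

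The main obstacle is not the shape of the argument but pinning down the constants in the origin-centered (RDA) case: identifying precisely how much weaker the stabilization term is than in the proximal case, so that one lands on the clean leading constant $\tfrac12 GD\sqrt{2T}$ together with an honest logarithmic correction rather than a degraded $\sqrt T$ term. The secondary bookkeeping is routine: checking that the chosen scalar schedules keep $\sigma_t$ nonnegative (which essentially dictates the $\sqrt t$ shape), using the hypothesis on $\grad f_t$ to confirm that the subgradients arising in the implicit update (evaluated at $\xti\in\Fs$) still satisfy $\smnorm{\cdot}\le G$, and verifying that the first-round and lower-order contributions genuinely collapse into $\BO(1)$.
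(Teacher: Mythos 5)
Your handling of the FTPRL and RDA cases is essentially the paper's own proof: the same scalar schedules ($\sigma_{1:t}=\sqrt{2}\,G\sqrt{t}/D$ for FTPRL, $\sigma_{1:t}=2\sqrt{2}\,G\sqrt{t}/D$ for RDA), the same decomposition into the comparator term $R_{1:T}(\xs)$ plus the stabilization sum $\frac{G^2}{2}\sum_t 1/\sigma_{1:t}$, and the same identification of the origin-centering cost in Theorem~\ref{thm:genmain} as the term $-g_t^\tp Q_{1:t}^{-1}Q_t(y_t-x_t)=\frac{\sigma_t}{\sigma_{1:t}}\,g_t\cdot x_t$. The paper bounds its sum by $\frac{GD}{\sqrt{2}}(\ln T+1)$ via $\sigma_t\le 2G/(D\sqrt{t})$ and hence $\sigma_t/\sigma_{1:t}\le 1/(\sqrt{2}\,t)$ for $t\ge 2$; your telescoping-logarithm estimate of the same quantity (note the relevant denominator is $\sigma_{1:t}$, not $\sigma_{1:t-1}$) gives $\BO(\ln T)$ as well, so that part is a harmless variant, and your constants for the leading $\sqrt{T}$ terms match the paper's.

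The genuine gap is the implicit-update mirror descent claim. By Corollary~\ref{cor:equiv-quad}, projected (implicit) mirror descent is encoded in Eq.~\eqref{eq:update} by folding the constraint into the loss, taking the update function to be $f^w_t(x)+\IF(x)$ with the separate $\Psi$ slot set to zero. The subdifferential of this function at a boundary point of $\Fs$ is $\grad f^w_t(x)$ plus the normal cone of $\Fs$, whose elements are \emph{not} bounded by $G$; so your justification that ``every subgradient appearing in the general bound is evaluated at a point of $\Fs$ and hence has norm at most $G$'' fails exactly in the case where you need it, and ``the identical computation yields the same bound'' does not follow without an extra step. The paper supplies that step explicitly: writing the effective gradient as $g^u_t=g^w_t+\phi_t$ for the particular normal-cone element $\phi_t$ arising from the optimality condition, it uses a Pythagorean-theorem argument for Bregman divergences (following standard arguments) to show $\smnorm{g^u_t}\le\smnorm{g^w_t}\le G$, after which the FTPRL computation applies verbatim. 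Some such argument is required — either this projection/contraction step, or an explicit appeal to the fact that the bound of Theorem~\ref{thm:genmain} holds for \emph{any} $g_t\in\partial f_t(x_t)$, so that one may select the subgradient with zero normal-cone component — and your sketch contains neither.
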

These bounds are achieved with an adaptive learning rate that depends
only on $t$ ($T$ need not be known in advance).  If $T$ is known, then
the $\sqrt{2}$ constant on the $\sqrt{T}$ terms can be eliminated.
The regret bounds with per-coordinate adaptive rates are at least as
good, and often better.  This corollary is a direct consequence of the
following general result:

\newcommand{\xn}{\hat{x}}

\newcommand{\gnt}{\hat{g}_t}
\newcommand{\flnR}{\fln^R}

\begin{theorem}\label{thm:genmain}
  Let $\Psi$ be an extended convex function on $\R^n$ with $\Psi(x)
  \geq 0$ and $0 \in \partial \Psi(0)$, let $f_t$ be a sequence of
  convex loss functions, and let $\alpha_t \in \R$ be non-negative
  and non-increasing real numbers ($0 \leq \alpha_{t+1} \leq
  \alpha_t$).  Consider the FTRL algorithm that plays $x_1 = 0$ and
  afterwards plays according to Equation~\eqref{eq:update},
  \begin{equation*}
    \xti = \argmin_x g'_{1:t-1} \cdot x + f_t(x) 
    + \alpha_{1:t} \Psi(x) + R_{1:t}(x),
  \end{equation*}
using
  incremental quadratic regularization functions $R_t(x) =
  \hnorm{Q_t^\h(x - y_t)}^2$ where $Q_1 \in \Snpp$, $Q_t \in \Snp$ for
  $t > 1$, and $y_t \in \R^n$.  Then there exist $\wg_t \in R^n$ such that 
  \begin{align*} 
  \Regret(f)   
   &\leq R_{1:T}(\xs) + \alpha_{1:T}\Psi(\xs) 
         + \sum_{t=1}^T (g_t - \h \wg_t)^\tp \Qtt^{-1} \wg_t  - g_t \Qtt\inv Q_t (y_t - x_t) \\
   &\leq R_{1:T}(\xs) + \alpha_{1:T}\Psi(\xs) 
         + \sum_{t=1}^T \hnorm{\Qtt\invh g_t}^2  - \delta_{1:t}
         - g_t \Qtt\inv Q_t (y_t - x_t) 
  \end{align*}
  versus any point $\xs \in \R^n$, for any $g_t \in \partial
  f_t(x_t)$, with $\delta \geq 0$.
\end{theorem}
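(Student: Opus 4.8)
The plan is to recognize the implicit composite update as an ordinary \emph{linearized} FTRL step, apply a strengthened follow-the-leader lemma to the induced surrogate objectives, and then translate the leader-style bound back into a regret bound using two applications of convexity --- one at $\xti$ and one at $\xt$ --- where the gap in the second application is exactly the one-step improvement of the implicit update. First, by Theorem~\ref{thm:std} the update~\eqref{eq:update} yields $g'_t \in \partial f_t(\xti)$ and $\phi_t \in \partial \Psi(\xti)$ with $g'_{1:t-1} + g'_t + \alpha_{1:t}\phi_t + \grad R_{1:t}(\xti) = 0$, which is precisely the stationarity condition of $\argmin_x g'_{1:t}\cdot x + \alpha_{1:t}\Psi(x) + R_{1:t}(x)$; hence $\xti$ also minimizes this linearized objective. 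Writing $h_0 \equiv 0$ and $h_t(x) = g'_t\cdot x + \alpha_t\Psi(x) + R_t(x)$ for $t\ge 1$, the algorithm therefore plays $\xti = \argmin_x h_{0:t}(x)$, with $x_1 = 0$ consistent since $h_0\equiv 0$.

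The first real step is to prove the strengthened FTRL/BTL lemma: whenever $\xti=\argmin_x h_{0:t}(x)$ and each $h_{0:t}$ is convex, then for every $\xs$,
\[ \sum_{t=1}^T h_t(\xti) \;\le\; \sum_{t=1}^T h_t(\xs) \;-\; B_{h_{0:T}}(\xs,x_{T+1}) \;-\; \sum_{t=1}^T B_{h_{0:t-1}}(\xti,\xt), \]
where $B_\phi(a,b)$ is the Bregman divergence of $\phi$ (in our quadratic setting $B_{h_{0:t-1}}(\xti,\xt)\ge\hnorm{Q_{1:t-1}^\h(\xti-\xt)}^2$, since the linear pieces contribute nothing and $\Psi$ contributes a nonnegative term). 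This follows by telescoping $\sum_t\big(h_{0:t}(\xti)-h_{0:t-1}(\xt)\big)=h_{0:T}(x_{T+1})$, using that $\xt$ minimizes $h_{0:t-1}$ (so $h_{0:t-1}(\xti)\ge h_{0:t-1}(\xt)+B_{h_{0:t-1}}(\xti,\xt)$) and that $x_{T+1}$ minimizes $h_{0:T}$. Substituting $h_t(\xti)=g'_t\cdot\xti+\alpha_t\Psi(\xti)+R_t(\xti)$ and $h_t(\xs)$ likewise, and dropping the nonnegative terms $B_{h_{0:T}}(\xs,x_{T+1})$ and $\alpha_t\Psi(\xti)$ (using $\Psi\ge0$, $\alpha_t\ge0$), we obtain $\sum_t g'_t\cdot(\xti-\xs) \le R_{1:T}(\xs)+\alpha_{1:T}\Psi(\xs)-\sum_t\big(R_t(\xti)+B_{h_{0:t-1}}(\xti,\xt)\big)$.

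Next, decompose $\Regret(f)=\sum_t\big(f_t(\xti)-f_t(\xs)\big)+\sum_t\big(f_t(\xt)-f_t(\xti)\big)$. Convexity at $\xti$ gives $f_t(\xti)-f_t(\xs)\le g'_t\cdot(\xti-\xs)$, so the first sum is bounded by the display above; convexity at $\xt$ gives $f_t(\xt)-f_t(\xti)=g_t\cdot(\xt-\xti)-\delta_t$ with $\delta_t:=g_t\cdot(\xt-\xti)-\big(f_t(\xt)-f_t(\xti)\big)\ge0$, the one-step improvement, which vanishes for linearized updates. Combining,
\[ \Regret(f) \;\le\; R_{1:T}(\xs)+\alpha_{1:T}\Psi(\xs)+\sum_{t=1}^T\Big(g_t\cdot(\xt-\xti)-R_t(\xti)-B_{h_{0:t-1}}(\xti,\xt)-\delta_t\Big), \]
so it remains to bound each summand by $(g_t-\h\wg_t)^\tp\Qtt\inv\wg_t-g_t^\tp\Qtt\inv Q_t(y_t-\xt)$. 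Subtracting the stationarity conditions at $\xt$ and $\xti$ and using $\grad R_{1:t}(x)=Q_{1:t}x-\sum_{s\le t}Q_s y_s$ shows that $\wg_t:=g'_t+\alpha_{1:t}\phi_t-\alpha_{1:t-1}\phi_{t-1}$ satisfies $\wg_t=Q_{1:t}(z_t-\xti)$, where $z_t:=\Qtt\inv(Q_{1:t-1}\xt+Q_t y_t)$ is the $Q$-weighted mean of $\xt$ and $y_t$; in particular $z_t-\xt=\Qtt\inv Q_t(y_t-\xt)$, so $g_t\cdot(\xt-\xti)=g_t^\tp\Qtt\inv\wg_t-g_t^\tp\Qtt\inv Q_t(y_t-\xt)$, producing the last term directly. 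Finally, the quadratic $x\mapsto R_t(x)+\hnorm{Q_{1:t-1}^\h(x-\xt)}^2$ is $Q_{1:t}$-strongly convex with minimizer $z_t$ and nonnegative minimum, so $R_t(\xti)+B_{h_{0:t-1}}(\xti,\xt)\ge R_t(\xti)+\hnorm{Q_{1:t-1}^\h(\xti-\xt)}^2\ge\h\,\wg_t^\tp\Qtt\inv\wg_t$; dropping $-\delta_t\le0$ then gives the first bound, and completing the square $(g_t-\h\wg_t)^\tp\Qtt\inv\wg_t=\hnorm{\Qtt\invh g_t}^2-\hnorm{\Qtt\invh(g_t-\wg_t)}^2$ (reinstating $\delta_t$) gives the second, with $\delta_{1:t}$ absorbing the nonnegative remainders $\hnorm{\Qtt\invh(g_t-\wg_t)}^2$ and the $\delta_t$'s.

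I expect the crux to be this last step: showing that the single Bregman bonus $B_{h_{0:t-1}}(\xti,\xt)$ delivered by the strengthened lemma, together with the centering term $R_t(\xti)$, is exactly enough to pay for $-\h\wg_t^\tp\Qtt\inv\wg_t$, so that moving the regularization center from $\xt$ to an arbitrary $y_t$ costs no more than the clean term $-g_t^\tp\Qtt\inv Q_t(y_t-\xt)$. Proving the strengthened FTRL lemma in the form that retains these bonuses (rather than the looser classical version) is the other delicate point; everything else --- the two convexity inequalities, the linearization reduction, the $\Qtt$ algebra, and noting that invertibility of $\Qtt$ follows from $Q_1\in\Snpp$ and that the composite subgradients $\phi_t$ enter only through $\wg_t$ (and so never need to be bounded) --- is routine.
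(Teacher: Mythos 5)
Your proposal is correct, but it takes a genuinely different route from the paper's proof. The paper first rewrites the update as FTRL on linearizations $\fln_t$ of $f_t$ taken at $\xti$, applies the Strong FTRL Lemma (Lemma~\ref{lem:tight_ftrl}) with the composite term folded into the regularizer $R'_t = R_t + \alpha_t\Psi$, disposes of the resulting $\sum_t \alpha_t\big(\Psi(\xt)-\Psi(\xti)\big)$ by a telescoping argument that uses the non-increasing $\alpha_t$, and then does the per-round work through Lemma~\ref{lem:rewrite} and Lemma~\ref{lem:newQ}: there $\wg_t$ arises as a subgradient of $f_t$ at the minimizer of an auxiliary $\Psi$-free problem, and $\delta$ is defined as the objective-value advantage of the implicit solution over the linearized one. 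You instead apply a strengthened follow-the-leader bound directly to the surrogates $h_t(x)=g'_t\cdot x+\alpha_t\Psi(x)+R_t(x)$, keeping the strong-convexity bonuses from $\xt$ minimizing $h_{0:t-1}$, split the regret with two subgradient inequalities (at $\xti$ with $g'_t$, at $\xt$ with $g_t$), and obtain $\wg_t$ in closed form by differencing the two optimality conditions, so that $\wg_t=\Qtt(z_t-\xti)$ with $z_t$ the $Q$-weighted combination of $\xt$ and $y_t$; the quadratic bonus $R_t(\xti)+\hnorm{Q_{1:t-1}^{\h}(\xti-\xt)}^2\geq\h\wg_t^\tp\Qtt\inv\wg_t$ then closes the argument. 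I verified the pieces (the telescoping with bonuses, the identity $g_t\cdot(\xt-\xti)=g_t^\tp\Qtt\inv\wg_t-g_t^\tp\Qtt\inv Q_t(y_t-\xt)$, and the quadratic lower bound via the minimizer $z_t$) and they hold. What your route buys: it bypasses Lemmas~\ref{lem:rewrite} and~\ref{lem:newQ} entirely, and it never actually invokes monotonicity of the $\alpha_t$ (only $\alpha_t\geq 0$, $\Psi\geq 0$), so it is arguably cleaner and slightly more general. What it gives up: your $\wg_t$ includes the difference of $\Psi$-subgradients ($\wg_t=g'_t+\alpha_{1:t}\phi_t-\alpha_{1:t-1}\phi_{t-1}$), so it is not a subgradient of $f_t$ alone, and your $\delta$ (the subgradient gap at $\xt$ plus a completing-the-square remainder) is not the paper's explicit quantification of the implicit-update advantage; the paper's follow-up remarks after the theorem (that $\wg_t\in\partial f_t(\xti)$ when $\alpha_t\equiv 0$, and that $\delta>0$ exactly when the implicit update is non-trivial) attach to its construction and would need separate justification under yours. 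Two small bookkeeping points: writing $B_{h_{0:t-1}}$ for a non-differentiable function should be replaced by the inequality $h_{0:t-1}(\xti)-h_{0:t-1}(\xt)\geq\hnorm{Q_{1:t-1}^{\h}(\xti-\xt)}^2$ you in fact use; and if you fold the convexity gap $\delta_t$ into the second bound's $\delta$ after dropping it from the first, the displayed chain ``first bound $\leq$ second bound'' can fail --- take the second bound's per-round $\delta$ to be just $\hnorm{\Qtt\invh(g_t-\wg_t)}^2$ (or retain $\delta_t$ in both lines) to keep the chain valid.
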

We will show that $\wg_t$ is a certain subgradient of $f_t$, and in
fact when all $\alpha_t = 0$, then $\wg_t \in \partial f_t(x_{t+1})$.
If $f_t$ is strictly convex, then in general $\wg_t \neq g_t$, and so
the inequality between the first and second bounds can be strict; in
fact, we will show that on rounds $t$ where the implicit-update is
non-trivial, $\delta > 0$, indicating a one-step advantage for
implicit updates.  When all $\alpha_t = 0$, $\delta_t$ is one-half the
improvement in the objective function of Equation~\eqref{eq:update}
obtained by solving for the optimum point rather than using a solution
from the linearized problem; the proof of Lemma~\ref{lem:newQ} makes
this precise.

For RDA, we take $y_t = 0$, and for FTPRL and implicit-update mirror
descent we take $y_t = x_t$.  Since no restrictions are placed on the
$y_t$ in the theorem, the final right-hand term being subtracted could
have be positive, negative, or zero.

If we treat $\alpha_t \Psi$ as an intrinsic part of the problem, that
is, we are measuring loss against $f_t(x) + \alpha_t \Psi(x)$, then
the $\alpha_{1:T} \Psi(\xs)$ term disappears from the regret bound.

\subsection{Notation and Technical Background}
\label{sec:notation}
We use the notation $\gtt$ as a shorthand for $\sum_{s=1}^t
g_s$. Similarly we write $\Qtt$ for a sum of matrices $Q_t$, and we
use $f_{1:t}$ to denote the function $f_{1:t}(x) = \sum_{s = 1}^t
f_s(x)$.  We assume the summation binds more tightly than exponents, so
$Q_{1:t}^\h = (Q_{1:t})^\h$.  We write $x^\tp y$ or $x \cdot y$ for the
inner product between $x,y \in \R^n$.  We write ``the functions
$f_t$'' for the sequence of functions $(f_1, \dots, f_T)$.

We write $\Snp$ for the set of symmetric positive semidefinite $n
\times n$ matrices, with $\Snpp$ the corresponding set of symmetric
positive definite matrices. Recall $A \in \Snpp$ means $\forall x\neq
0,\ x^\tp Ax > 0$.  Since $A \in \Snp$ is symmetric, $x^\tp A y =
y^\tp A x$ (we often use this result implicitly).
For $B \in
\Snp$, we write $B^{1/2}$ for the square root of $B$, the unique $X \in
\Snp$ such that  $XX = B$ (see, for example,~\emcite[A.5.2]{boyd}).

Unless otherwise stated, convex functions are assumed to be extended,
with domain $\R^n$ and range $\R \cup \set{\infty}$ (see, for
example~\cite[3.1.2]{boyd}).  For a convex function $f$, we let $\partial
f(x)$ denote the set of subgradients of $f$ at $x$ (the
subdifferential of $f$ at $x$).  By definition, $g \in
\partial f(x)$ means $ f(y) \geq f(x) + g^\tp(y - x) $ for all $y$.
When $f$ is differentiable, we write $\grad f(x)$ for the gradient of
$f$ at $x$.  In this case, $\partial f(x) = \set{\grad f(x)}$.  All
mins and argmins are over $\R^n$ unless otherwise noted.
We make frequent use of the following standard results, summarized as
follows:
\begin{theorem}\label{thm:std}
  Let $R:\R^n \rightarrow \R$ be strongly convex with continuous first
  partial derivatives, and let $\Phi$ and $f$ be arbitrary
  (extended) convex functions.  Then,
  \begin{enumerate}[A.]
  \item Let  $U(x) = R(x) + \Phi(x)$. Then,
    there exists a unique pair $(x^*, \phi^*)$ such that both
    \[ 
    \qquad \phi^* \in \partial \Phi(x^*) 
    \qquad \text{and} \qquad 
    \qquad x^* = \argmin_x R(x) + \phi^* \cdot x.
    \] 
    Further, this $x^*$ is the unique minimizer of $U$, and $\grad R(x^*) +
    \phi^* = 0$.
  \item Let $V(x) = R(x) + \Phi(x) + f(x)$ and $\xs = \argmin_x V(x)$.
    Then, there exists a $g \in \partial f(\xs)$ such that
    \[ \xs = \argmin_x R(x) + \Phi(x) + g\cdot x.\]
  \end{enumerate}
\end{theorem}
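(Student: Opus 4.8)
\ The plan is to derive both parts from two standard facts of convex analysis: (i) a proper, lower-semicontinuous, strongly convex function on $\R^n$ attains its minimum at a unique point, and a $C^1$ function attains a minimum only where its gradient vanishes; and (ii) the Moreau--Rockafellar subdifferential sum rule, which here applies without any qualification condition whenever one of the summands (namely $R$, possibly plus a linear term) is finite on all of $\R^n$ with continuous gradient. I would cite a standard reference (e.g.\ Rockafellar, or the appendix of \cite{boyd}) for these rather than reprove them.

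For part~A, I would first note that $U = R + \Phi$ is strongly convex (strong convexity of $R$ plus convexity of $\Phi$) and proper lsc, hence has a unique minimizer $x^*$. Fermat's rule gives $0 \in \partial U(x^*)$, and since $R$ is differentiable and finite everywhere the sum rule gives $\partial U(x^*) = \grad R(x^*) + \partial\Phi(x^*)$; so there is a $\phi^* \in \partial\Phi(x^*)$ with $\grad R(x^*) + \phi^* = 0$. The function $x \mapsto R(x) + \phi^*\cdot x$ is strongly convex and $C^1$ with gradient $\grad R(x) + \phi^*$, which vanishes exactly at $x^*$, so $x^* = \argmin_x R(x) + \phi^*\cdot x$; this yields existence of the pair together with the stated identity. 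For uniqueness I would reverse the argument: if $(\tilde x, \tilde\phi)$ satisfies both conditions, the second forces $\grad R(\tilde x) + \tilde\phi = 0$, hence $0 \in \grad R(\tilde x) + \partial\Phi(\tilde x) = \partial U(\tilde x)$, whence $\tilde x = x^*$ by strong convexity of $U$, and then $\tilde\phi = -\grad R(x^*) = \phi^*$.

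Part~B follows the same template with $f$ adjoined. Now $V = R + \Phi + f$ is strongly convex, so $\xs$ is well defined, and Fermat together with the sum rule give $0 \in \partial V(\xs) = \grad R(\xs) + \partial\Phi(\xs) + \partial f(\xs)$; choose $\phi \in \partial\Phi(\xs)$ and $g \in \partial f(\xs)$ with $\grad R(\xs) + \phi + g = 0$. Then $x \mapsto R(x) + \Phi(x) + g\cdot x$ is strongly convex and its subdifferential at $\xs$ is $\grad R(\xs) + \partial\Phi(\xs) + g$, which contains $\grad R(\xs) + \phi + g = 0$, so $\xs$ is its unique minimizer, i.e.\ $\xs = \argmin_x R(x) + \Phi(x) + g\cdot x$.

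The one point needing care is the subdifferential calculus. Splitting off $\partial$ of a sum involving $R$ (or $R$ plus a linear term) is free since $R$ is finite everywhere, but in part~B one also uses $\partial(\Phi + f)(\xs) = \partial\Phi(\xs) + \partial f(\xs)$, which requires $\operatorname{ri}(\operatorname{dom}\Phi)\cap\operatorname{ri}(\operatorname{dom} f)\neq\emptyset$. This holds in every application in the paper --- $\Phi$ is an indicator of a feasible set or a finite norm such as $\lambda\smnorm{\cdot}_1$, and the $f_t$ of interest are finite --- so I would either impose it as a standing regularity assumption on $\Phi$ and $f$ or verify it per application. Thus the main (and fairly modest) obstacle is not any deep argument but simply making explicit the hypotheses under which the sum rule is valid; everything else is routine.
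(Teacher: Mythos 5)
Your proposal is correct and follows essentially the same route as the paper's proof: uniqueness of the minimizer from strong convexity, the first-order condition $\grad R(x^*)+\phi^*=0$ obtained by splitting the subdifferential, and reversing that argument for uniqueness of the pair (in part~B you verify directly that $0$ lies in the subdifferential of $x\mapsto R(x)+\Phi(x)+g\cdot x$ at $\xs$, where the paper instead invokes part~A with $R(x)\leftarrow R(x)+g\cdot x$ --- a cosmetic difference). The constraint-qualification point you raise for $\partial(\Phi+f)(\xs)=\partial\Phi(\xs)+\partial f(\xs)$ is genuine: the paper simply asserts ``by definition of optimality'' that there exist $\phi\in\partial\Phi(\xs)$ and $g\in\partial f(\xs)$ with $g+\phi+\grad R(\xs)=0$, without any regularity hypothesis on the arbitrary extended convex $\Phi$ and $f$, so your explicit caveat (and the observation that it holds in every application in the paper) is, if anything, more careful than the original.
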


\begin{proof}
  First we consider part $A$.  Since $R$ is strongly convex, $U$ is
  strongly convex, and so has a unique minimizer $x^*$ (see for
  example, \cite[9.1.2]{boyd}).  Let $r = \grad R$.  Since $x^*$ is a
  minimizer of $U$, there must exist a $\phi^* \in \partial \Phi(x^*)$
  such that $r(x^*) + \phi^* = 0$, as this is a necessary (and
  sufficient) condition for $0 \in \partial U(x^*)$.  It follows that
  $x^* = \argmin_x R(x) + \phi^* \cdot x$, as $r(x^*) + \phi^*$ is the
  gradient of this objective at $x^*$.  Suppose some other $(x',
  \phi')$ satisfies the conditions of the theorem.  Then, $r(x') +
  \phi' = 0$, and so $0 \in \partial U(x')$, and so $x'$ is a
  minimizer of $U$.  Since this minimizer is unique, $x' = x^*$, and
  $\phi' = -r(x^*) = \phi^*$.  An equivalent condition to $x^* =
  \argmin_x R(x) + \phi^* \cdot x$ is $\grad R(x^*) + \phi^* = 0$.
  
  \newcommand{\hphi}{\hat{\phi}} For part $B$, by definition of
  optimality, there exists a $\phi \in \partial \Phi(\xs)$ and a $g
  \in \partial f(\xs)$ such that $g + \phi + \grad R(\xs) = 0$.
  Choosing this $g$, define
  \[ \hx = \argmin_x R(x) + \Phi(x) + g \cdot x.
  \] 
  Applying part $A$ with $R(x) \leftarrow R(x) + g\cdot x$, there
  exists a unique pair $(\hx, \hphi)$ such that $\hphi \in \partial
  \Phi(\hx)$ and $\grad R(\hx) + \hphi + g = 0$.  Since $(\xs, \phi)$
  satisfy this equation, we conclude $\xs = \hx$.
\end{proof}

\paragraph{Feasible Sets}
In some applications, we may be restricted to only play points from a
convex feasible set $\Fs \subseteq \R^n$, for example, the set of
(fractional) paths between two nodes in a graph.  A feasible set is
also necessary to prove regret bounds against linear functions.
With composite updates, Equations~\eqref{eq:composite_only} and
\eqref{eq:update}, this is accomplished for free by choosing $\Psi$ to
be the indicator function $\IF$ on $\Fs$, where $\IF(x) = 0$ for $x
\in \Fs$ and $\infty$ otherwise.
It is straightforward to verify that 
\[ 
  \argmin_{x \in \R^n} g_{1:t} \cdot x + R_{1:t}(x) + \IF(x)
   \quad = \quad
  \argmin_{x \in \Fs} g_{1:t} \cdot x + R_{1:t}(x),
\] 
and so in this work we can generalize (for example) the results
of~\cite{mcmahan10boundopt} for specific feasible sets without
specifically discussing $\Fs$, and instead considering arbitrary
extended convex functions $\Psi$.  Note that in this case the choice
of $\alpha_t$ does not matter as long as $\alpha_1 > 0$.

\section{Mirror Descent Follows The Leader}
\label{sec:equiv}

In this section we consider the relationship between mirror descent
algorithms (the simplest example being online gradient descent) and
FTRL algorithms.  Let $f_t(x) = g_t \cdot x + \Psi(x)$.

Let $R_1$ be strongly convex, with all the
$R_t$ convex.  We assume that $\min_x R_1(x) = 0$, and
assume that $x=0$ is the unique minimizer unless otherwise noted.

\paragraph{Follow The Regularized Leader (FTRL)} 
The simplest follow-the-regularized-leader algorithm plays
\begin{equation}\label{eq:simple-ftrl}
 x_{t+1} = \argmin_{x} g_{1:t} \cdot x + \frac{\sigma_{1:t}}{2}\smnorm{x}^2_2,
\end{equation}
where $\sigma_{1:t} \in \R$ is the amount of stabilizing strong convexity added.

A more general update is
\begin{equation*} %
 x_{t+1} = \argmin_{x} g_{1:t} \cdot x + R_{1:t}(x).
\end{equation*}
where we add an additional convex function $R_t$ on each round.  When
$\argmin_{x \in \R^n} R_t(x) = 0,$ we call the functions $R_t$ (and
associated algorithms) \emph{origin-centered}.  We can also define
\emph{proximal} versions of FTRL\footnote{We adapt the name
  ``proximal'' from~\cite{do09proximal}, but note that while similar
  proximal regularization functions were considered, that paper deals
  only with gradient descent algorithms, not FTRL.} that center
additional regularization at the current point rather than at the
origin.  In this section, we write $\Rp_t(x) = R_t(x - x_t)$ and
reserve the $R_t$ notation for origin-centered functions.  Note that
$\Rp_t$ is only needed to select $x_{t+1}$, and $x_t$ is known to the
algorithm at this point, ensuring the algorithm only needs access to
the first $t$ loss functions when computing $\xti$ (as required).

\paragraph{Mirror Descent}

The simplest version of mirror descent is gradient descent using a
constant step size $\eta$, which plays
\begin{equation}\label{eq:simple-gd}
 x_{t+1} = x_t  - \eta g_t = -\eta g_{1:t}.
\end{equation}

In order to get low regret, $T$ must be known in advance so $\eta$ can
be chosen accordingly (or a doubling trick can be used).  But, since
there is a closed-form solution for the point $x_{t+1}$ in terms of
$g_{1:t}$ and $\eta$, we generalize this to a ``revisionist''
algorithm that on each round plays the point that gradient descent
with constant step size would have played if it had used step size
$\eta_t$ on rounds $1$ through $t-1$.  That is, $ x_{t+1} = -\eta_t
g_{1:t}.$   When $R_t(x) = \frac{\sigma_t}{2}\smnorm{x}^2_2$ and $\eta_t =
\frac{1}{\sigma_{1:t}}$, this is equivalent to the FTRL of
Equation~\eqref{eq:simple-ftrl}.

In general, we will be more interested in gradient descent algorithms
which use an adaptive step size that depends (at least) on the
round $t$.  Using a variable step size $\eta_t$ on each round,
gradient descent plays:
\begin{equation}\label{eq:simple-gd-adapt}
 x_{t+1} = x_t - \eta_t g_t.
\end{equation}
An intuition for this update comes from the fact it can be re-written as
\[
x_{t+1} = \argmin_x g_t \cdot x 
+ \frac{1}{2\eta_t}\smnorm{x - x_t}_2^2.
\] 
This version captures the notion (in online learning
terms) that we don't want to change our hypothesis $x_t$ too much (for
fear of predicting badly on examples we have already seen), but we do
want to move in a direction that decreases the loss of our hypothesis
on the most recently seen example.  Here, this is approximated by the
linear function $g_t$, but implicit updates use the exact loss $f_t$.

Mirror descent algorithms use this intuition, replacing the
$L_2$-squared penalty with an arbitrary Bregman divergence.  For a
differentiable, strictly convex  $R$, the corresponding Bregman divergence is 
\[ \Br_R(x, y) = R(x) - \big(R(y) + \grad R(y)\cdot (x - y)\big)\]
for any $x,y \in \R^n$.
We then have the update
\begin{equation} \label{eq:gen-gd-exact}
 x_{t+1} = \argmin_{x} g_t \cdot x + \frac{1}{\eta_t}\Br_R(x, x_t),
\end{equation}
or explicitly (by setting the gradient of~\eqref{eq:gen-gd-exact} to zero),
\begin{align} 
x_{t+1} = r^{-1}( r(x_t) - \eta_t g_t) \label{eq:gen-gd}
\end{align}
where $r = \grad R$.  Letting $R(x) = \frac{1}{2}\smnorm{x}^2_2$ so that
$\Br_R(x, x_t) = \h \smnorm{x - x_t}_2^2$ recovers the algorithm of
Equation~\eqref{eq:simple-gd-adapt}.  One way to see this is to note
that $r(x) = r^{-1}(x) = x$ in this case.

We can generalize this even further by adding a new strongly convex
function $R_t$ to the Bregman divergence on each round.  Namely, let
\[
\Brtt(x, y) = \sum_{s=1}^t \Br_{R_s}(x, y),\]
 so the update becomes
\begin{equation}\label{eq:md}
 \xti = \argmin_x g_t \cdot x + \Brtt(x, x_t)
\end{equation}
or equivalently 
$
x_{t+1} = (r_{1:t})^{-1}( r_{1:t}(x_t) - g_t)
$
where $r_{1:t} = \sum_{s=1}^t \grad R_t = \grad R_{1:t}$ and 
$(r_{1:t})^{-1}$ is the inverse of $r_{1:t}$. 
The step size $\eta_t$ is now encoded implicitly in the choice of
$R_t$.

Composite-objective mirror descent (COMID)~\cite{duchi10composite} handles
$\Psi$ functions\footnote{Our $\Psi$ is denoted $r$
  in~\cite{duchi10composite}} as part of the objective on each round:
$f_t(x) = g_t \cdot x + \Psi(x)$.  Using our notation, the COMID
update is
\[
\xti = \argmin_x \eta g_t \cdot x + \Br(x, x_t) + \eta \Psi(x),
\]
which can be generalized to 
\begin{equation}\label{eq:gen-comid}
 \xti = \argmin_x g_t \cdot x + \Psi(x) + \Brtt(x, x_t),
\end{equation}
where the learning rate $\eta$ has been rolled into the definition of
$R_1, \dots, R_t$. 
When $\Psi$ is chosen to be the indicator function on a convex set,
COMID reduces to standard mirror descent with greedy projection.

\subsection{An Equivalence Theorem for Proximal Regularization}

The following theorem shows that mirror descent algorithms can be
viewed as FTRL algorithms:

\newcommand{\Brp}{\tilde{\Br}}  %
\newcommand{\Brptt}{\Brp_{1:t}}
\newcommand{\Brpt}{\Brp_t}

\begin{theorem} \label{thm:gd-is-proxftrl}
  Let $R_t$ be a sequence of differentiable origin-centered convex
  functions $(\grad R_t(0) = 0)$, with $R_1$ strongly convex, and let
  $\Psi$ be an arbitrary convex function.  Let $x_1 = \hx_1 = 0$.  For
  a sequence of loss functions $f_t(x) + \Psi(x)$, let the sequence of
  points played by the implicit-update composite-objective mirror
  descent algorithm be
  \begin{equation}\label{eq:comidF}%
    \hx_{t+1} = \argmin_{x}\ f_t(x) + \alpha_t \Psi(x) + \Brptt(x, \hx_t),
  \end{equation}
  where $\Rp_t(x) = R_t(x - \hx_t)$, and $\Brpt = \Br_{\Rp_t}$, so
  $\Brptt$ is the Bregman divergence with respect to $\Rp_1 + \dots +
  \Rp_t$.  Consider the alternative sequence of points $x_t$ played by a
  proximal FTRL algorithm, applied to these same $f_t$, defined by
  \begin{equation}\label{eq:ftprlcomid}%
    x_{t+1} =  \argmin_{x} \ (g_{1:t-1}' + \phi_{1:t-1}) \cdot x 
    + f_t(x) + \alpha_t \Psi(x)
    + \Rp_{1:t}(x) 
  \end{equation}
  for some $g'_t \in \partial f_t(\xti)$ and $\phi_t \in \partial
  (\alpha_t \Psi)(\xti)$.  Then, these algorithms are equivalent, in
  that $x_t = \hx_t$ for all $t > 0$.
\end{theorem}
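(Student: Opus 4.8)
I would argue by induction on $t$, carrying a slightly strengthened hypothesis: besides $x_t = \hx_t$, I would also maintain the identity
\[ g'_{1:t-1} + \phi_{1:t-1} = -\grad \Rp_{1:t-1}(\hx_t), \]
where $g'_s \in \partial f_s(x_{s+1})$ and $\phi_s \in \partial(\alpha_s \Psi)(x_{s+1})$ are the particular subgradients used at round $s$ in \eqref{eq:ftprlcomid}. Both assertions are immediate at $t = 1$, since $x_1 = \hx_1 = 0$ and the relevant sums are empty.

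The first ingredient is to rewrite the implicit mirror-descent step \eqref{eq:comidF} as an FTRL-type minimization. Expanding $\Brptt(x, \hx_t) = \Rp_{1:t}(x) - \Rp_{1:t}(\hx_t) - \grad\Rp_{1:t}(\hx_t) \cdot (x - \hx_t)$ and discarding the terms that do not depend on $x$, the update \eqref{eq:comidF} becomes $\hx_{t+1} = \argmin_x f_t(x) + \alpha_t \Psi(x) + \Rp_{1:t}(x) - \grad\Rp_{1:t}(\hx_t) \cdot x$. This objective differs from the one in \eqref{eq:ftprlcomid} only in its linear term; both are strongly convex (because $R_1$ is) with unique minimizers, so the two updates produce the same point whenever $g'_{1:t-1} + \phi_{1:t-1} = -\grad\Rp_{1:t}(\hx_t)$. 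It therefore suffices to prove this identity for every $t$.

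The second ingredient is that a freshly added proximal regularizer has zero gradient at its own center: since $\Rp_t(x) = R_t(x - \hx_t)$ and $R_t$ is origin-centered, $\grad\Rp_t(\hx_t) = \grad R_t(0) = 0$, so $\grad\Rp_{1:t}(\hx_t) = \grad\Rp_{1:t-1}(\hx_t)$. Combining this with the inductive hypothesis gives $g'_{1:t-1} + \phi_{1:t-1} = -\grad\Rp_{1:t-1}(\hx_t) = -\grad\Rp_{1:t}(\hx_t)$, which is precisely the condition above, so $x_{t+1} = \hx_{t+1}$ (this also uses $x_t = \hx_t$, so that the functions $\Rp_s$ occurring in the two updates are literally the same). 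To re-establish the strengthened hypothesis at $t+1$, I would invoke Theorem~\ref{thm:std} for the minimization \eqref{eq:ftprlcomid} (taking $R(x) \leftarrow \Rp_{1:t}(x) + (g'_{1:t-1}+\phi_{1:t-1})\cdot x$, $\Phi \leftarrow \alpha_t\Psi$, $f \leftarrow f_t$): it supplies $g'_t \in \partial f_t(x_{t+1})$ and $\phi_t \in \partial(\alpha_t\Psi)(x_{t+1})$ with $(g'_{1:t-1}+\phi_{1:t-1}) + g'_t + \phi_t + \grad\Rp_{1:t}(x_{t+1}) = 0$, i.e. $g'_{1:t} + \phi_{1:t} = -\grad\Rp_{1:t}(\hx_{t+1})$, as needed.

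The part I expect to take the most care is the bookkeeping of the subgradients: the FTRL recursion in \eqref{eq:ftprlcomid} is implicitly self-referential --- round $t$'s objective uses the $g'_s, \phi_s$ pinned down at earlier rounds --- so one must check that the subgradients produced by Theorem~\ref{thm:std} at each round are exactly the ones the next round's objective refers to. Everything else is routine: differentiability of the $R_t$ legitimizes the Bregman expansion and the stationarity equation, and strong convexity of $\Rp_{1:t}$ and of $\Brptt(\cdot, \hx_t)$ is what lets us pass from ``identical minimands'' to ``identical minimizers.''
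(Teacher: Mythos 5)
Your proof is correct, and it follows the same inductive skeleton as the paper's: both arguments hinge on the identity $g'_{1:t-1} + \phi_{1:t-1} = -\grad \Rp_{1:t-1}(\hx_t)$ (which the paper derives afresh each round from the previous round's optimality condition, Eq.~\eqref{eq:tmopt}, while you carry it explicitly as a strengthened induction hypothesis) together with the observation that the newly added proximal term has zero gradient at its own center. The one genuine difference is the mechanism for concluding $x_{t+1} = \hx_{t+1}$: the paper works at the level of first-order optimality conditions --- it shows the pairs $(\hgp_t + \hat\phi_t, \hx_{t+1})$ and $(g'_t + \phi_t, x_{t+1})$ satisfy the same stationarity equation and invokes the uniqueness clause of Theorem~\ref{thm:std}A --- whereas you expand the Bregman divergence and show the two minimands are literally identical up to an additive constant, so equality of the (unique) minimizers is immediate and Theorem~\ref{thm:std} is only needed afterward to pin down the subgradients $g'_t, \phi_t$ that re-establish your invariant. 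Your route is slightly more economical: it needs no differentiability assumption on $f_t$ (the paper assumes this ``for simplicity'' and relegates the general case to a footnote, since without it only the sum $g'_t + \phi_t$ is determined), and it avoids the uniqueness-of-pair argument for the comparison step; what the paper's route buys in exchange is the explicit identification $g'_t = \hgp_t$ and $\phi_t = \hat\phi_t$ of the individual subgradients in the differentiable case. You also correctly flag the two bookkeeping points that make the argument go through: the self-referential choice of subgradients in Eq.~\eqref{eq:ftprlcomid} must be the one supplied by Theorem~\ref{thm:std}, and the induction hypothesis $x_s = \hx_s$ is what guarantees the proximal regularizers appearing in the two updates coincide.
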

We defer the proof to the end of this section.  The Bregman
divergences used by mirror descent in the theorem are with respect to
the proximal functions $\Rp_{1:t}$, whereas typically (as in
Equation~\eqref{eq:md}) these functions would not depend on the
previous points played.  We will show when $R_t(x) = \h
\smnorm{Q_t^{\frac{1}{2}}x}^2_2$, this issue disappears.
Considering arbitrary $\Psi$ functions and implicit updates also
complicates the theorem statement somewhat.  The following corollary
sidesteps these complexities, to state a simple direct equivalence
result:

\begin{corollary}\label{cor:proximal}
Let $f_t(x) = g_t \cdot x$.  Then, the following algorithms play
identical points:
\begin{itemize} \itemsep -5pt
\item Gradient descent with positive semi-definite learning rates
  $Q_t$, defined by:
   \[x_{t+1} = x_t - Q_{1:t}^{-1} g_t.\]

   \item \prox with regularization functions $\Rp_t(x) =
   \frac{1}{2}\norm{Q_t^{\h}(x - x_t)}^2_2$, which plays
   \[ x_{t+1} = \argmin_{x} g_{1:t} \cdot x + \Rp_{1:t}(x).\]
\end{itemize}
\end{corollary}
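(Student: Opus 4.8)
The plan is to obtain this as a special case of Theorem~\ref{thm:gd-is-proxftrl}. First I would instantiate that theorem with $\Psi \equiv 0$ (so the $\alpha_t\Psi$ terms drop out and the $\alpha_t$ become irrelevant) and with the linear losses $f_t(x) = g_t\cdot x$; since $f_t$ is differentiable we may take $g'_t = g_t$ and $\phi_t = 0$ in \eqref{eq:ftprlcomid}. Under these choices the proximal-FTRL update \eqref{eq:ftprlcomid} becomes $x_{t+1} = \argmin_x g_{1:t-1}\cdot x + g_t\cdot x + \Rp_{1:t}(x) = \argmin_x g_{1:t}\cdot x + \Rp_{1:t}(x)$, which with $\Rp_t(x) = \hnorm{Q_t^\h(x - x_t)}^2$ is exactly the \prox update in the statement; and \eqref{eq:comidF} becomes the mirror-descent update $\hx_{t+1} = \argmin_x g_t\cdot x + \Brptt(x, \hx_t)$, with nothing ``implicit'' left since $f_t$ is linear. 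I would then check the hypotheses: $R_t(x) = \hnorm{Q_t^\h x}^2$ is origin-centered because $\grad R_t(x) = Q_t x$ vanishes at $x = 0$, and $R_1$ is strongly convex exactly when $Q_1 \in \Snpp$ --- a condition needed anyway for $Q_{1:t}^{-1}$ in the gradient-descent update to make sense, and which I would state explicitly at the outset. Theorem~\ref{thm:gd-is-proxftrl} then yields $x_t = \hx_t$ for all $t$, so \prox plays the same points as this mirror-descent algorithm.

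It then remains to identify that mirror-descent update with the recursion $x_{t+1} = x_t - Q_{1:t}^{-1}g_t$. The only real computation is the Bregman divergence of $\Rp_s(x) = \hnorm{Q_s^\h(x - \hx_s)}^2$: putting $u = x - \hx_s$ and $v = y - \hx_s$, a one-line expansion gives $\Br_{\Rp_s}(x,y) = \h u^\tp Q_s u - \h v^\tp Q_s v - (Q_s v)^\tp(u - v) = \h(u - v)^\tp Q_s(u - v) = \hnorm{Q_s^\h(x - y)}^2$, so the center $\hx_s$ cancels and $\Brptt(x,y) = \hnorm{Q_{1:t}^\h(x - y)}^2$ --- exactly the simplification noted after the theorem. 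Setting the gradient of $g_t\cdot x + \Brptt(x, \hx_t)$ to zero gives $g_t + Q_{1:t}(x - \hx_t) = 0$, i.e.\ $\hx_{t+1} = \hx_t - Q_{1:t}^{-1}g_t$, which is the claimed update (with the common initialization $x_1 = \hx_1 = 0$ inherited from the theorem). Chaining the two equalities closes the argument.

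There is also a fully self-contained route I would mention as an alternative: differentiate the \prox objective directly to get $g_{1:t} + Q_{1:t}x - \sum_{s=1}^t Q_s x_s = 0$, hence $x_{t+1} = Q_{1:t}^{-1}\big(\sum_{s=1}^t Q_s x_s - g_{1:t}\big)$, and then prove by induction on $t$ that this equals $x_t - Q_{1:t}^{-1}g_t$, using the same stationarity identity at round $t-1$ to rewrite $\sum_{s=1}^{t-1} Q_s x_s = g_{1:t-1} + Q_{1:t-1}x_t$. I do not expect a genuine obstacle on either route; the only care needed is bookkeeping --- recording precisely where $Q_1 \in \Snpp$ is used (invertibility of $Q_{1:t}$ and strong convexity of $R_1$) and noting that both algorithms are started at the origin.
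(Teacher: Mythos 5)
Your proposal is correct and follows essentially the same route as the paper: specialize Theorem~\ref{thm:gd-is-proxftrl} with $\Psi = 0$ (hence $\phi_t = 0$) and linear $f_t$, and observe that the Bregman divergence of the proximal quadratic regularizers collapses to $\frac{1}{2}\smnorm{Q_{1:t}^{1/2}(x-y)}_2^2$ so that the stated gradient-descent recursion is exactly the corresponding mirror-descent update. Your direct expansion of the Bregman divergence (rather than citing that $R_{1:t}$ and $\Rp_{1:t}$ differ by a linear function) and your explicit note that $Q_1 \in \Snpp$ is what supplies strong convexity and invertibility are fine, minor variations on the paper's argument.
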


\begin{proof}
Let $R_t(x) = \h x^\tp Q_t x$.  It is easy to show that $R_{1:t}$ and
$\Rp_{1:t}$ differ by only a linear function, and so (by a standard
result) $\Brtt$ and $\Brptt$ are equal, and simple algebra reveals
\[\Brtt(x, y) = \Brptt(x, y) = \h \smnorm{Q_{1:t}^\h (x - y)}_2^2.\]
Then, it follows from Equation~\eqref{eq:gen-gd} that the first
algorithm is a mirror descent algorithm using this Bregman divergence.
Taking $\Psi(x) = 0$ and hence $\phi_t = 0$, the result follows from
Theorem~\ref{thm:gd-is-proxftrl}.
\end{proof}

Extending the approach of the corollary to \fobos, we see the only
difference between that algorithm and \prox is that \prox optimizes
over $t \Psi(x)$, whereas in Equation~\eqref{eq:ftprlcomid} we
optimize over $\phi_{1:t-1} \cdot x + \Psi(x)$ (see
Table~\ref{table:algorithms}).  Thus, \fobos is equivalent to \prox,
except that \fobos approximates all but the most recent $\Psi$
function by a subgradient.

The behavior of \prox can thus be different from COMID when a
non-trivial $\Psi$ is used.  While we are most concerned with the
choice $\Psi(x) = \lambda \smnorm{x}_1$, it is also worth considering
what happens when $\Psi$ is the indicator function on a feasible set
$\Fs$.  Then, Theorem~\ref{thm:gd-is-proxftrl} shows that mirror
descent on $f_t(x) = g_t \cdot x + \Psi(x)$ (equivalent to COMID in
this case) approximates previously seen $\Psi$s by their subgradients,
whereas \prox optimizes over $\Psi$ explicitly.  In this
case, it can be shown that the mirror-descent update corresponds to the standard greedy
projection~\cite{zinkevich03giga}, whereas \prox corresponds to a lazy
projection \cite{mcmahan10boundopt}.\footnote{
\citet[Sec. 5.2.3]{zinkevich04thesis} describes a different lazy
projection algorithm, which requires an appropriately chosen constant
step-size to get low regret.  \prox does not suffer from this problem,
because it always centers the additional regularization $R_t$ at
points in $\Fs$, whereas our results show the algorithm of Zinkevich
centers the additional regularization \emph{outside} of $\Fs$, at the
optimum of the unconstrained optimization.  This leads to the high
regret in the case of standard adaptive step sizes, because the
algorithm can get ``stuck'' too far outside the feasible set to make
it back to the other side.}

For the analysis in Section~\ref{sec:analysis}, we will use this
special case for quadratic regularization:
\begin{corollary}\label{cor:equiv-quad}
  Consider Implicit-Update Composite-Objective Mirror Descent, which plays
  \begin{equation}%
    \hx\ti = \argmin f_t(x) + \alpha_t \Psi(x) + \hnorm{Q_{1:t}^\h (x - \hx_t)}^2.
  \end{equation}
  Then an equivalent FTPRL update is
  \begin{equation}%
    \xti = \argmin_x\ \  (g'_{1:t-1} + \phi_{1:t-1}) \cdot x + f_t(x)  + 
    \alpha_t \Psi(x) + \frac{1}{2}\sum_{s=1}^t\norm{ Q_s^{\frac 1 2}(x  - x_s)}^2
  \end{equation}
  for some $g'_t \in \partial f_t(\xti)$ and $\phi_t \in \partial
  (\alpha_t \Psi)(\xti)$.
\end{corollary}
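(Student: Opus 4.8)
The plan is to obtain Corollary~\ref{cor:equiv-quad} as the special case of Theorem~\ref{thm:gd-is-proxftrl} in which the origin-centered regularizers are the quadratics $R_t(x) = \h\, x^\tp Q_t x = \hnorm{Q_t^\h x}^2$. First I would verify that these $R_t$ meet the hypotheses of the theorem: each is differentiable with $\grad R_t(x) = Q_t x$, so $\grad R_t(0) = 0$ and the $R_t$ are origin-centered, while $R_1$ is strongly convex precisely because $Q_1 \in \Snpp$ (this is the one place positive-\emph{definiteness} of the first matrix enters; for $t>1$ only $Q_t \in \Snp$ is used). The associated proximal regularizers are then $\Rp_t(x) = R_t(x - \hx_t) = \hnorm{Q_t^\h(x - \hx_t)}^2$, exactly the terms appearing in both displayed updates of the corollary.

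The only computation to do is to identify the Bregman divergence $\Brptt$ driving the mirror-descent update~\eqref{eq:comidF}. Exactly as in the proof of Corollary~\ref{cor:proximal}, each $\Rp_s$ differs from the origin-centered $R_s$ only by an affine function of $x$, namely $\Rp_s(x) = R_s(x) - (Q_s \hx_s)^\tp x + \h\, \hx_s^\tp Q_s \hx_s$; hence $\Rp_{1:t}$ and $R_{1:t}$ also differ only by an affine function. Since adding an affine term leaves a Bregman divergence unchanged, $\Brptt = \Br_{R_{1:t}}$, and for $R_{1:t}(x) = \h\, x^\tp Q_{1:t} x$ one computes $\Br_{R_{1:t}}(x,y) = \h\,(x-y)^\tp Q_{1:t}(x-y) = \hnorm{Q_{1:t}^\h(x-y)}^2$. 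Taking $y = \hx_t$ shows that~\eqref{eq:comidF} is exactly the first displayed update of the corollary.

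It then remains to read off the FTPRL side: since $\Rp_{1:t}(x) = \sum_{s=1}^t R_s(x - \hx_s) = \h \sum_{s=1}^t \smnorm{Q_s^\h(x - \hx_s)}^2$, substituting into update~\eqref{eq:ftprlcomid} yields precisely the second displayed update of the corollary, with the same side conditions $g'_t \in \partial f_t(\xti)$ and $\phi_t \in \partial(\alpha_t \Psi)(\xti)$, and Theorem~\ref{thm:gd-is-proxftrl} gives $x_t = \hx_t$ for all $t$. I do not expect any genuine obstacle here; the only point requiring care is the bookkeeping that the proximal centers in~\eqref{eq:ftprlcomid} are the mirror-descent iterates $\hx_s$, so it is precisely the theorem's conclusion $x_s = \hx_s$ that lets us rewrite them as the FTPRL iterates $x_s$ and makes the corollary a genuine equivalence rather than a circular restatement.
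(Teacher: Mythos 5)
Your proposal is correct and matches the paper's (implicit) argument: the paper treats Corollary~\ref{cor:equiv-quad} as a direct specialization of Theorem~\ref{thm:gd-is-proxftrl} with $R_t(x) = \h x^\tp Q_t x$, using exactly the computation from the proof of Corollary~\ref{cor:proximal} that $\Rp_{1:t}$ and $R_{1:t}$ differ by an affine term, so $\Brptt(x,y) = \h\smnorm{Q_{1:t}^\h(x-y)}_2^2$. Your closing remark about using the theorem's conclusion $x_s = \hx_s$ (inductively) to rewrite the proximal centers is the right bookkeeping and is consistent with how the paper states the result.
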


Again let $\fw_t$ be the loss functions provided by the world, and let
$\fu_t$ be the functions defining the update and used in the above
corollary.  Then, we encode implicit mirror descent by taking $\fu_t(x)
\leftarrow \fw_t(x) + \alpha_t \Psi(x)$.  We recover standard
(non-implicit) COMID by taking $\fu_t(x) \leftarrow \grad
\fw_t(\xt)\cdot x + \alpha_t \Psi(x)$.  Applying this result leads to
the expression for COMID in Table~\ref{table:algorithms}.

Note that in both cases, the $\Psi$ listed separately in
Eq.~\eqref{eq:update} is taken to be zero; the $\Psi$ specified in the
problem only enters into the update through the $\fu_t$.  That is, we
don't actually need the machinery developed in this work for composite
updates, rather we get an analysis of mirror-descent style composite
updates via our analysis of \emph{implicit} updates.  The machinery
for explicitly handling the full $\alpha_{1:t} \Psi$ penalty should be
used in practice, however (see Section~\ref{sec:motivation}).  Note
also that the standard COMID algorithm can thus be viewed as a
half-implicit algorithm: it uses an implicit update with respect to
the $ \Psi$ term, but applies an immediate subgradient approximation
to $\fw_t$.

We conclude the section with the proof of the main equivalence result.
\begin{myproof}{Proof of Theorem~\ref{thm:gd-is-proxftrl}}
  \newcommand{\hphi}{\hat{\phi}} \newcommand{\hgp}{\hat{g}'} For
  simplicity we consider the case where $f_t$ is
  differentiable.\footnote{This ensures both $g'_t$ and $\phi_t$ are
    uniquely determined; the proof still holds for general convex
    $f_t$, but only the sum $g'_t + \phi_t$ will be uniquely
    determined.}  By applying Theorem~\ref{thm:std} to
  Eq.~\eqref{eq:ftprlcomid} (taking $\Phi$ to be all the terms other
  than the cumulative regularization), there exists a $\phi_t \in \partial (\alpha_t \Psi)(x\ti)$  such that $g'_t = \grad f_t(\xti)$ and
  \begin{equation}\label{eq:fopt}
    g'_{1:t} + \phi_{1:t} + \grad \Rp_{1:t}(\xti) = 0.
  \end{equation}
  Similarly, applying Theorem~\ref{thm:std} to Eq.~\eqref{eq:comidF}
  implies there exists a $\hphi_t \in \partial (\alpha_t \Psi)(\hx\ti)$ such that
  $\hgp_t = \grad f_t(\hx\ti)$ and 
  \begin{equation}\label{eq:copt}
    \hgp_t + \hphi_t + \grad \Rp_{1:t}(\hx\ti) - \grad \Rp_{1:t}(\hx_t) = 0,
  \end{equation}
  recalling that $\grad_u B_R(u, v) = \grad R(u) - \grad R(v)$.

  We now proceed by induction on $t$, with the induction hypothesis
  that $x_t = \hx_t$.  The base case $t=1$ follows from the assumption
  that $\hx_1 = x_1 = 0$.  Suppose the induction hypothesis holds for
  $t$.  
  Taking Eq.~\eqref{eq:fopt} for $t-1$ gives $g'_{1:t-1} +
  \phi_{1:t-1} + \grad \Rp_{1:t-1}(x_t) = 0$, and since $\grad
  \Rp_t(x_t) = 0$, we have
  \begin{equation}\label{eq:tmopt}
    -\grad \Rp_{1:t}(x_t) =  g'_{1:t-1} + \phi_{1:t-1}
  \end{equation}

  Beginning from Eq.~\eqref{eq:copt},
  \begin{align}
   \hgp_t + \hphi_t + &\grad \Rp_{1:t}(\hx\ti) - \grad \Rp_{1:t}(\hx_t) \notag \\
       &= \hgp_t + \hphi_t + \grad \Rp_{1:t}(\hx\ti) - \grad \Rp_{1:t}(x_t) && \text{by the I.H.} \notag \\
       &= g'_{1:t-1} + \hgp_t + \phi_{1:t-1} + \hphi_t + \grad \Rp_{1:t}(\hx\ti),  \label{eq:otheropt}
  \end{align}
  where the last line uses Eq.~\eqref{eq:tmopt}.  The proof follows by
  applying Lemma~\ref{thm:std} to Eqs.~\eqref{eq:fopt} and
  \eqref{eq:otheropt}, and considering the pairs $(\hgp_t + \hphi_t,
  \hx\ti)$ and $(g'_t + \phi_t, \xti)$.  The equality $\phi_t =
  \hphi_t$ follows from the fact that $g'_t = \hgp_t$ since $f_t$ is
  differentiable.
\end{myproof}

\subsection{An Equivalence Theorem for 
Origin-Centered Regularization}\label{sec:gd-fprime}

For the moment, suppose $\Psi(x) = 0$.  So far, we have shown
conditions under which gradient descent on $f_t(x) = g_t \cdot x$ with
an adaptive step size is equivalent to
follow-the-proximally-regularized-leader.  In this section, we show
that mirror descent on the \emph{regularized} functions $\hf_t(x) =
g_t \cdot x + R_t(x)$, with a certain natural step-size, is equivalent
to a follow-the-regularized-leader algorithm with origin-centered
regularization.  For simplicity, in this section we restrict our
attention to linear $f_t$ (equivalently, non-implicit updates).  The
extension to implicit updates is straightforward.

The algorithm schema we consider next was introduced by \citet[Theorem
2.1]{bartlett07adaptive}.  Letting $R_t(x) =
\frac{\sigma_t}{2}\smnorm{x}_2^2$ and fixing $\eta_t =
\frac{1}{\sigma_{1:t}}$, their adaptive online gradient descent
algorithm is
\begin{equation*}
x_{t+1} = x_t - \eta_t \grad \hf_t(x_t)  
        = x_t - \eta_t (g_t + \sigma_t x_t)).
\end{equation*}
We show (in Corollary~\ref{cor:origin}) that this algorithm is
identical to follow-the-leader on the functions $\hf_t(x) = g_t\cdot x
+ R_t(x)$, an algorithm that is minimax optimal in terms of regret
against quadratic functions like $\hf$~\cite{abernethy08}.  As with
the previous theorem, the difference between the two is how they
handle an arbitrary $\Psi$.  If one uses $\Rp_t(x) =
\frac{\sigma_t}{2}\smnorm{x - x_t}^2_2$ in place of $R_t(x)$, this
algorithm reduces to standard online gradient descent
\citep{do09proximal}.

The key observation of~\citet{bartlett07adaptive} is that if the
underlying functions $f_t$ have strong convexity, we can roll that
into the $R_t$ functions, and so introduce less additional
stabilizing regularization, leading to regret bounds that interpolate
between $\sqrt{T}$ for linear functions and $\log T$ for strongly
convex functions.  Their work did not consider composite objectives
($\Psi$ terms), but our equivalence theorems show their adaptivity
techniques can be lifted to algorithms like \orgn and \prox that
handle such non-smooth functions more effectively than mirror descent
formulations.

We will prove our equivalence theorem for a generalized versions of
the algorithm.  Instead of vanilla gradient descent, we analyze the
mirror descent algorithm of Equation~\eqref{eq:gen-comid}, but now
$g_t$ is replaced by $\grad \hf_t(x_t)$, and we add the composite term
$\Psi(x)$.

\begin{theorem} \label{thm:gen-gdfprime}
Let $f_t(x) = g_t \cdot x$, and let $\hf_t(x) = g_t \cdot x + R_t(x)$,
where $R_t$ is a differentiable convex function.  Let $\Psi$ be an
arbitrary convex function.  Consider the composite-objective
mirror-descent algorithm which plays
\begin{equation}\label{eq:mdhf}
\hx_{t+1} = \argmin_x \grad \hf_t(\hx_t) \cdot x + \Psi(x) + \Brtt(x, \hx_t),
\end{equation}
and the FTRL algorithm which plays
\begin{equation}\label{eq:ftrl}
 x_{t+1} = \argmin_x \hf_{1:t}(x) + \phi_{1:t-1} \cdot x + \Psi(x),
\end{equation}
for $\phi_t \in \partial \Psi(\xti)$ such that
$g_{1:t} + \grad R_{1:t}(\xti) + \phi_{1:t-1} + \phi_t = 0$.
If both algorithms play $\hx_1 = x_1 = 0$, then they are equivalent,
in that $x_t = \hx_t$ for all $t > 0$.
\end{theorem}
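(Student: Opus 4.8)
The plan is to mirror the structure of the proof of Theorem~\ref{thm:gd-is-proxftrl}: use Theorem~\ref{thm:std} to extract the optimality conditions of both updates, set up an induction on $t$ with hypothesis $x_t = \hx_t$, and then massage the mirror-descent optimality condition at round $t$ into exactly the form of the FTRL optimality condition at round $t$, so that uniqueness (part A of Theorem~\ref{thm:std}) forces the two next iterates to coincide. First I would apply Theorem~\ref{thm:std}(A) to the FTRL update~\eqref{eq:ftrl}, taking the cumulative regularization to be $R_{1:t}$ (which is strongly convex since $R_1$ is) and $\Phi$ to be everything else: this yields a $\phi_t \in \partial\Psi(\xti)$ with $g_{1:t} + \grad R_{1:t}(\xti) + \phi_{1:t-1} + \phi_t = 0$, which is exactly the stated condition. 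Applying it to the mirror-descent update~\eqref{eq:mdhf}, and recalling $\grad_u \Br_{1:t}(u,v) = \grad R_{1:t}(u) - \grad R_{1:t}(v)$ together with $\grad \hf_t(\hx_t) = g_t + \grad R_t(\hx_t)$, I get some $\hphi_t \in \partial\Psi(\hx\ti)$ with
\[
 g_t + \grad R_t(\hx_t) + \hphi_t + \grad R_{1:t}(\hx\ti) - \grad R_{1:t}(\hx_t) = 0.
\]

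Next comes the induction. The base case is immediate from $x_1 = \hx_1 = 0$. For the step, assume $x_t = \hx_t$. The key is to eliminate the "backward-looking" terms $\grad R_t(\hx_t)$ and $\grad R_{1:t}(\hx_t)$ in the mirror-descent condition using the FTRL condition from round $t-1$. From~\eqref{eq:ftrl} at $t-1$ I have $g_{1:t-1} + \grad R_{1:t-1}(x_t) + \phi_{1:t-2} + \phi_{t-1} = 0$, i.e. $\grad R_{1:t-1}(x_t) = -(g_{1:t-1} + \phi_{1:t-1})$. Using the induction hypothesis $\hx_t = x_t$, I can then rewrite $\grad R_t(\hx_t) - \grad R_{1:t}(\hx_t) = \grad R_t(x_t) - \grad R_{1:t}(x_t) = -\grad R_{1:t-1}(x_t) = g_{1:t-1} + \phi_{1:t-1}$. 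Substituting this into the mirror-descent optimality condition collapses it to
\[
 g_t + (g_{1:t-1} + \phi_{1:t-1}) + \hphi_t + \grad R_{1:t}(\hx\ti) = g_{1:t} + \grad R_{1:t}(\hx\ti) + \phi_{1:t-1} + \hphi_t = 0,
\]
which is structurally identical to the FTRL condition satisfied by $(\xti,\phi_t)$.

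Finally I would conclude by uniqueness. Rewrite both conditions as optimality conditions for $\argmin_x (g_{1:t} + \phi_{1:t-1})\cdot x + R_{1:t}(x) + \Psi(x)$ (the FTRL objective~\eqref{eq:ftrl} with the known linear and regularization terms folded together and $\Psi$ playing the role of $\Phi$), and apply Theorem~\ref{thm:std}(A): there is a \emph{unique} pair (point, subgradient of $\Psi$) satisfying the stationarity equation, so $\xti = \hx\ti$ (and also $\phi_t = \hphi_t$). This closes the induction.

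I expect the main obstacle to be purely bookkeeping: correctly handling the split between the $R_t$ that is "already accumulated at $x_t$" versus the increment, and the off-by-one indices on the $\phi$ sums (the FTRL update at round $t$ uses $\phi_{1:t-1}$, not $\phi_{1:t}$, because $\phi_t$ is only determined implicitly by the solution). The conceptual content — that shifting the Bregman center telescopes against the accumulated origin-centered regularization, exactly as in Theorem~\ref{thm:gd-is-proxftrl} but without the proximal re-centering — is straightforward; the risk is a sign error or an index slip when substituting the round-$(t-1)$ condition. A minor additional point worth stating explicitly is why the FTRL update is well-defined (i.e. why such a $\phi_t$ exists): this is again Theorem~\ref{thm:std}(A), applied with $R \leftarrow R_{1:t} + (g_{1:t}+\phi_{1:t-1})\cdot x$ and $\Phi \leftarrow \Psi$, so it costs nothing beyond what is already invoked.
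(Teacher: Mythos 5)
Your proposal is correct and follows essentially the same route as the paper's proof: extract the optimality conditions of both updates via Theorem~\ref{thm:std}, use the round-$(t-1)$ FTRL condition together with the induction hypothesis to rewrite $\grad R_t(\hx_t) - \grad R_{1:t}(\hx_t) = g_{1:t-1} + \phi_{1:t-1}$, collapse the mirror-descent condition to $g_{1:t} + \grad R_{1:t}(\hx_{t+1}) + \phi_{1:t-1} + \hat\phi_t = 0$, and conclude by uniqueness of the pair (point, subgradient). Nothing essential is missing.
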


The most important corollary of this result is that it lets us add the
adaptive online gradient descent algorithm to
Table~\ref{table:algorithms}.  It is also instructive to specialize to the
simplest case when $\Psi(x) = 0$ and the regularization is quadratic:
\begin{corollary}\label{cor:origin}
Let $f_t(x) = g_t \cdot  x$ and 
$\hf_t(x) = g_t \cdot x + \frac{\sigma_t}{2}\smnorm{x}^2_2.$
Then the following algorithms play identical points:
\begin{itemize} \itemsep -2pt
   \item FTRL, which plays $x_{t+1} = \argmin_{x} \hf_{1:t}(x).$ 

   \item Gradient descent on the functions $\hf$ using the step size
   $\eta_t = \frac{1}{\sigma_{1:t}}$, which plays 
    \[ x_{t+1} = x_t - \eta_t \grad \hf_t(x_t) \]

   \item Revisionist constant-step size gradient descent with $\eta_t
   = \frac{1}{\sigma_{1:t}}$, which plays 
   \[ x_{t+1} = -\eta_t g_{1:t}.\]
\end{itemize}
\end{corollary}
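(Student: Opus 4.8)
The plan is to obtain Corollary~\ref{cor:origin} as the quadratic, $\Psi \equiv 0$ specialization of Theorem~\ref{thm:gen-gdfprime}, together with two short closed-form computations. First I would take $\Psi(x) = 0$ (so every $\phi_t = 0$) and $R_t(x) = \frac{\sigma_t}{2}\smnorm{x}_2^2$. Theorem~\ref{thm:gen-gdfprime} then states directly that the FTRL iterate $x_{t+1} = \argmin_x \hf_{1:t}(x)$ of Eq.~\eqref{eq:ftrl} coincides, for all $t>0$, with the mirror-descent iterate $\hx_{t+1} = \argmin_x \grad\hf_t(\hx_t)\cdot x + \Brtt(x,\hx_t)$ of Eq.~\eqref{eq:mdhf}; the base case $x_1 = \hx_1 = 0$ is given by hypothesis. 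This already identifies the first algorithm in the list with (a form of) the second.

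Next I would rewrite that mirror-descent update in the claimed gradient-descent form. For each quadratic $R_s$ the Bregman divergence is $\Br_{R_s}(x,y) = \frac{\sigma_s}{2}\smnorm{x-y}_2^2$, so by the definition $\Brtt = \sum_{s=1}^t \Br_{R_s}$ we get $\Brtt(x,y) = \frac{\sigma_{1:t}}{2}\smnorm{x-y}_2^2$. Since $R_1$ is strongly convex, $\sigma_1 > 0$ and hence $\sigma_{1:t} > 0$, so the objective in Eq.~\eqref{eq:mdhf} is strongly convex with a unique minimizer; setting its gradient to zero gives $\grad\hf_t(\hx_t) + \sigma_{1:t}(x - \hx_t) = 0$, i.e. $\hx_{t+1} = \hx_t - \frac{1}{\sigma_{1:t}}\grad\hf_t(\hx_t) = \hx_t - \eta_t\grad\hf_t(\hx_t)$ with $\eta_t = 1/\sigma_{1:t}$ (this is also immediate from the closed form following Eq.~\eqref{eq:md}, since here $r_{1:t}(x) = \sigma_{1:t}\,x$). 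That is exactly the second algorithm, so the first two play identical points.

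For the third, ``revisionist'' algorithm I would simply solve the FTRL optimization explicitly: $\hf_{1:t}(x) = g_{1:t}\cdot x + \frac{\sigma_{1:t}}{2}\smnorm{x}_2^2$ is strongly convex, and its unique minimizer solves $g_{1:t} + \sigma_{1:t}\,x = 0$, giving $x_{t+1} = -\frac{1}{\sigma_{1:t}}g_{1:t} = -\eta_t g_{1:t}$. Chaining the three identifications yields the corollary.

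I do not expect a genuine obstacle here; the proof is bookkeeping layered on Theorem~\ref{thm:gen-gdfprime}. The two points that deserve a moment's care are: (i) the step size is forced to be $\eta_t = 1/\sigma_{1:t}$ precisely because Eq.~\eqref{eq:mdhf} uses the \emph{cumulative} divergence $\Brtt$ rather than a single $\Br_{R_t}$ carrying its own $\eta_t$; and (ii) the descent is on $\hf_t$, not on $f_t$, so $\grad\hf_t(\hx_t) = g_t + \sigma_t\hx_t \neq g_t$ in general — it is exactly the cumulative-divergence structure (equivalently, the origin-centered FTRL view) that makes the $\sigma_t\hx_t$ terms telescope so that all three recursions land on the same point $-\eta_t g_{1:t}$.
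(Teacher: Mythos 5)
Your proposal is correct and follows essentially the same route as the paper: specialize Theorem~\ref{thm:gen-gdfprime} to $\Psi=0$ with quadratic $R_t$, note that the cumulative Bregman divergence collapses to $\frac{\sigma_{1:t}}{2}\smnorm{x-\hx_t}_2^2$ so the mirror-descent update is exactly gradient descent on $\hf_t$ with $\eta_t = 1/\sigma_{1:t}$, and obtain the revisionist form by solving the FTRL objective in closed form. The paper states this only as a brief remark, and your write-up supplies the same bookkeeping in full detail.
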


The last equivalence in the corollary follows from deriving the closed
form for the point played by FTRL.  We now proceed to the proof of the
general theorem:

\begin{myproof}{Proof of Theorem~\ref{thm:gen-gdfprime}}
The proof is by induction, using the induction hypothesis $\hx_t =
x_t$.  The base case for $t=1$ follows by inspection.  Suppose the
induction hypothesis holds for $t$; we will show it also holds for
$t+1$.  
Again let $r_t = \grad R_t$ and consider Equation~\eqref{eq:ftrl}.
Since $R_1$ is assumed to be strongly convex, applying
Theorem~\ref{thm:std} gives us that $x_t$ is the unique solution to
$\grad \hf_{1:t-1}(x_t) + \phi_{1:t-1}= 0$ and so $g_{1:t-1} +
r_{1:t-1}(x_t) +  \phi_{1:t-1} = 0$.  Then, by the induction hypothesis,
\begin{equation}\label{eq:ihg}
- r_{1:t-1}(\hx_t) = g_{1:t-1} + \phi_{1:t-1}.
\end{equation}

Now consider Equation~\eqref{eq:mdhf}.  Since $R_1$ is strongly
convex, $\Brtt(x, \hx_t)$ is strongly convex in its first argument,
and so by Theorem~\ref{thm:std} we have that $\hx_{t+1}$ and some
$\phi_t' \in \partial
\Psi(\hx_{t+1})$ are the unique solution to
\[
 \grad \hf_t(\hx_t) + \phi_t' + \rtt(\hx_{t+1}) - \rtt(\hx_t) = 0,
\]
since $\grad_p \Br_R(p,q) = r(p) - r(q)$.
Beginning from this equation,
\begin{align*}
0 
  &= \grad \hf_t(\hx_t) + \phi_t' + \rtt(\hx_{t+1}) - \rtt(\hx_t) \\
  &= g_t + r_t(\hx_t) + \phi_t' + \rtt(\hx_{t+1}) - \rtt(\hx_t) \\
  &= g_t + \rtt(\hx_{t+1}) + \phi_t'  - r_{1:t-1}(\hx_t) \\
  &= g_t + \rtt(\hx_{t+1}) + \phi_t' + g_{1:t-1} + \phi_{1:t-1} && \text{Eq~\eqref{eq:ihg}}\\
  &= g_{1:t} + \rtt(\hx_{t+1}) + \phi_{1:t-1} + \phi'_t.
\end{align*}
 Applying Theorem~\ref{thm:std} to Equation~\eqref{eq:ftrl}, $(\xti,
 \phi_t)$ are the unique pair such that
\[ g_{1:t} + \rtt(x_{t+1}) + \phi_{1:t-1} + \phi_t = 0\]
and $\phi_t \in \partial \Psi(x_{t+1})$,
and so we conclude $\hx_{t+1} = \xti$ and $\phi_t' = \phi_t$.
\end{myproof}

\section{Regret Analysis}\label{sec:analysis}

In this section, we prove the regret bounds of
Theorem~\ref{thm:genmain} and Corollary~\ref{cor:main}.  Recall the
general update we analyze is
\begin{equation} \tag{\ref{eq:update}}
\xti = \argmin_x  g'_{1:t-1}\cdot x + f_t(x) + \alpha_{1:t}\Psi(x) + R_{1:t}(x) 
\end{equation}
where $\gpt \in \partial f_t(\xti)$.
It will be useful to consider the equivalent (by
Theorem~\ref{thm:std}) update
\begin{equation}\label{eq:primeupdate}
x_{t+1} = \argmin_x g'_{1:t}\cdot x + \alpha_{1:t} \Psi(x) + R_{1:t}(x).
\end{equation}
We can view this alternative update as running FTRL on the
linear approximations of $f_t$ taken at $x_{t+1}$,
\[\fln_t(x) = f_t(x_{t+1}) + \gpt \cdot (x - \xti).\]
To see the equivalence, note the constant terms in $\fln$ change
neither the argmin nor regret.  This is still an implicit update, as
implementing the update requires an oracle to compute an appropriate
subgradient $\gpt$ (say, by finding $x_{t+1}$ via
Equation~\eqref{eq:update}).

This re-interpretation is essential, as it lets us analyze a
follow-the-leader algorithm on convex functions; note that the
objective function of Equation~\eqref{eq:update} is not the sum of one
convex function per round, as when moving from $x_{t-1}$ to $x_t$ we
effectively add $g'_{t-1}\cdot x - f_{t-1}(x) + f_t(x)$ to the
objective, which is not in general convex.  By immediately applying an
appropriate linearization of the loss functions, we avoid this
non-convexity.

The affine functions $\fln$ lower bound $f_t$, and so can be used to
lower bound the loss of any $\xs$; however, in contrast to the more
typical subgradient approximations taken at $x_t$, these linear
functions are not tight at $x_t$, and so our analysis must also
account for the additional loss $f_t(x_t) - \fln_t(x_t)$.  
Before formalizing these arguments in the proof of
Theorem~\ref{thm:genmain}, we prove the following lemma.  We will use
this lemma to get a tight bound on the regret of the algorithm against
the linearized functions $\fln$, but it is in fact much more general.

\begin{lemma}[Strong FTRL Lemma]\label{lem:tight_ftrl}
  Let $f_t$ be a sequence of arbitrary (e.g., non-convex) loss
  functions, and let $R_t$ be arbitrary non-negative regularization
  functions.  Define $\hf_t(x) = f_t(x) + R_t(x)$.  Then, if we play
  $x_{t+1} = \argmin_x \hf_{1:t}(x)$, our regret against the functions
  $f_t$ versus an arbitrary point $\xs$ is bounded by
  \[
    \Regret \leq 
       R_{1:T}(\xs) + \sum_{t=1}^T \Big(\hf_{1:t}(x_t) 
          - \hf_{1:t}(\xti) - R_t(x_t)\Big).
  \]
\end{lemma}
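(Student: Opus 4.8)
The plan is to establish the bound by a telescoping argument over the "be-the-leader" quantity $\hf_{1:t}(x_{t+1})$, combined with the standard observation that follow-the-leader (here, on the regularized functions $\hf_t$) has non-positive regret against the sequence $\hf_t$ itself. Concretely, I would start from the identity
\[
\sum_{t=1}^T f_t(x_t) = \sum_{t=1}^T \hf_t(x_t) - \sum_{t=1}^T R_t(x_t),
\]
so it suffices to bound $\sum_t \hf_t(x_t)$. The key is to compare $\hf_t(x_t)$ to $\hf_{1:t}(x_t) - \hf_{1:t-1}(x_t)$ — but that is not quite an equality, so instead I would work directly with the partial-sum values $V_t := \hf_{1:t}(x_{t+1}) = \min_x \hf_{1:t}(x)$ and form the telescoping sum $\sum_{t=1}^T \big(\hf_{1:t}(x_t) - \hf_{1:t}(x_{t+1})\big)$, which is exactly the error term appearing (after adding back $R_t(x_t)$) in the lemma's right-hand side.

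The central step is a "be-the-leader"–style inequality showing
\[
\sum_{t=1}^T \hf_t(x_t) \;\leq\; \hf_{1:T}(\xs) + \sum_{t=1}^T \big(\hf_{1:t}(x_t) - \hf_{1:t}(x_{t+1})\big)
\]
for any comparator $\xs$. To get this, rewrite $\hf_t(x_t) = \hf_{1:t}(x_t) - \hf_{1:t-1}(x_t)$ and note that $\hf_{1:t-1}(x_t) = \hf_{1:t-1}(x_{t}) \geq \min_x \hf_{1:t-1}(x) = \hf_{1:t-1}(x_t)$ — wait, $x_t$ \emph{is} the minimizer of $\hf_{1:t-1}$, so actually $\hf_{1:t-1}(x_t) = V_{t-1}$. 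Thus
\[
\sum_{t=1}^T \hf_t(x_t) = \sum_{t=1}^T \big(\hf_{1:t}(x_t) - V_{t-1}\big)
= \sum_{t=1}^T \big(\hf_{1:t}(x_t) - V_t\big) + \sum_{t=1}^T (V_t - V_{t-1}),
\]
and the last sum telescopes to $V_T - V_0 = \hf_{1:T}(x_{T+1}) \leq \hf_{1:T}(\xs)$ (taking $V_0 = 0$, i.e. an empty sum, or absorbing $R_0 \equiv 0$). Since $\hf_{1:t}(x_t) - V_t = \hf_{1:t}(x_t) - \hf_{1:t}(x_{t+1})$, this is exactly the desired inequality. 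Combining with the first identity and using $R_{1:T}(\xs) \geq$ the $\Psi$-free pieces — actually just rearranging — gives
\[
\Regret = \sum_t f_t(x_t) - \sum_t f_t(\xs) \leq \hf_{1:T}(\xs) - f_{1:T}(\xs) + \sum_t\big(\hf_{1:t}(x_t) - \hf_{1:t}(x_{t+1}) - R_t(x_t)\big),
\]
and $\hf_{1:T}(\xs) - f_{1:T}(\xs) = R_{1:T}(\xs)$, which is the claimed bound.

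The main obstacle — really the only subtle point — is bookkeeping at the boundary: making sure the telescoping is set up so that $x_1 = \argmin_x \hf_{1:0}$ is well-defined (it is, since $\hf_{1:0} \equiv 0$ means any point, and the paper plays $x_1 = 0$), and that the "$-V_0$" term vanishes while the "$V_T$" term is correctly bounded by plugging in $\xs$ into $\hf_{1:T}$. One must also be careful that the step $\hf_{1:t-1}(x_t) = V_{t-1}$ uses precisely the definition $x_t = \argmin_x \hf_{1:t-1}(x)$ — note the index shift relative to the lemma statement's $x_{t+1} = \argmin_x \hf_{1:t}(x)$, which is the same statement. No convexity of $f_t$ or $R_t$ is needed anywhere, only that the argmins exist (which we may simply assume, as the lemma speaks of playing $x_{t+1} = \argmin_x \hf_{1:t}(x)$), and non-negativity of $R_t$ is not even used in this derivation — it is only relevant for how one later applies the lemma. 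I would therefore present the argument cleanly as: (i) the telescoping identity for $\sum_t \hf_t(x_t)$, (ii) bounding the telescoped term by $\hf_{1:T}(\xs)$, (iii) subtracting $\sum_t f_t(\xs)$ and recognizing $R_{1:T}(\xs)$, (iv) moving the $\sum_t R_t(x_t)$ to the error term.
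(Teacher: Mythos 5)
Your proof is correct and follows essentially the same route as the paper: both decompose $\sum_t \hf_t(x_t)$ into telescoping leader-value differences, use only the single inequality $\hf_{1:T}(x_{T+1}) \leq \hf_{1:T}(\xs)$ from the minimizer property, and then rearrange the $R_t(x_t)$ and $R_{1:T}(\xs)$ terms to obtain the stated bound. Your packaging via $V_t = \hf_{1:t}(x_{t+1})$ is just a notational variant of the paper's re-indexing step.
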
 
A weaker (though sometimes easier to use) version of this lemma,
stating
\[ \Regret \leq R_{1:T}(\xs) + \sum_{t=1}^T \big(f_{t}(x_t) - f_{t}(\xti)\big),\]
has been used
previously~\cite{kalai03ftpl,hazan08extract,mcmahan10boundopt}.  In
the case of linear functions with quadratic regularization, as in the
analysis of ~\emcite{mcmahan10boundopt}, the weaker version loses a
factor of $\h$ (corresponding to a $\sqrt{2}$ in the final bound).
The key is that in that case, being the leader is \emph{strictly
  better} than playing the post-hoc optimal point.  Quantifying this
difference leads to the improved bounds for FTPRL in this paper.

\begin{myproof}{Proof of Lemma~\ref{lem:tight_ftrl}}
First, we consider regret against the functions $\hf$ for not playing $\xs$:
\begin{align*}
\Regret(\hf) 
   &= \sum_{t=1}^T (\hf_t(x_t) - \hf_t(\xs))  &&\text{by definition}\\
   &= \sum_{t=1}^T \hf_t(x_t) - \hf_{1:T} (\xs) \\
   &= \sum_{t=1}^T (\hf_{1:t}(x_t) - \hf_{1:t-1}(x_{t})) -\hf_{1:T}(\xs)
        && \text{where $f_{1:0}(x) = 0$} \\
   &\leq \sum_{t=1}^T (\hf_{1:t}(x_t) - \hf_{1:t-1}(x_{t})) -\hf_{1:T}(x_{T+1})
        && \text{since $x_{T+1}$ minimizes $\hf_{1:T}$}\\
   &= \sum_{t=1}^T (\hf_{1:t}(x_t) - \hf_{1:t}(x_{t+1})),
\end{align*}
where the last line follows by simply re-indexing the $-\hf_{1:t}$
terms.  Equivalently, applying the definitions of regret and $\hf$,
\[ 
  \sum_{t=1}^T (f_t(x_t) + R_t(x_t)) - f_{1:T}(\xs) - R_{1:T}(\xs)
     \leq \sum_{t=1}^T (\hf_{1:t}(x_t) - \hf_{1:t}(x_{t+1})).
\]
Re-arranging the inequality proves the theorem.
\end{myproof}

With this lemma in hand, we turn to our main proof.  It is worth
noting that the second half of the proof simplifies significantly when
we choosing $x_t = y_t$, as in FTPRL.

\begin{myproof}{Proof of Theorem~\ref{thm:genmain}}
Recall $\fln_t(x) = f_t(x_{t+1}) + \gpt \cdot (x - \xti)$, a linear
approximation of $f_t$ taken at the next point, $x_{t+1}$.
We can bound the regret of our algorithm (expressed as an FTRL
algorithm on the functions $\fln_t$, Equation~\eqref{eq:primeupdate})
against the functions $\fln_t$ by applying Lemma~\ref{lem:tight_ftrl}
to the functions $\fln_t$ with regularization functions $R'_t(x) = R_t(x) +
\alpha_t \Psi(x)$.  Because we are taking the linear approximation at
$\xti$ instead of $x_t$, it may be the case that our actual loss
$f_t(x_t)$ on round $t$ is greater than the loss under $\fln_t$, that
is we may have $f_t(x_t) > \fln_t(x_t)$.  Thus, we must account for
this additional regret.  From the definition of regret we have
\begin{align*} 
\Regret(f)   
 &= \Regret(\fln) + \sum_{t=1}^T (f_t(x_t) - \fln_t(x_t))
    + ( \fln_{1:t}(\xs) - f_{1:t}(\xs) ) \\
 &\leq \Regret(\fln) + \sum_{t=1}^T (f_t(x_t) - \fln_t(x_t)) 
\intertext{since  $\fln_t$ lower bounds $f_t$, 
and letting $\flnR_t(x) = \fln_t(x) + R'_t(x)$,}
 &\leq \underbrace{R'_{1:T}(\xs) 
       + \sum_{t=1}^T (\flnR_{1:t}(x_t) - \flnR_{1:t}(\xti) - R_t'(x_t)) 
       }_{\text{Lemma~\ref{lem:tight_ftrl} on $\fln_t$ and $R'_t$}}
     \quad + \underbrace{\sum_{t=1}^T (f_t(x_t) - \fln_t(x_{t})).}_{
          \text{Underestimate of real loss at $x_t$}}
\end{align*}
Let $\Delta_t$ be the contribution of the non-regularization terms for a particular $t$,
\begin{align*}
\Delta_t
  & = \flnR_{1:t}(x_t) - \flnR_{1:t}(\xti) + f_t(x_t) - \fln_t(x_{t}), \\
  &= \fln_{1:t}(x_t) + R'_{1:t}(x_t) - \fln_{1:t}(\xti) - R'_{1:t}(\xti) +  f_t(x_t) - \fln_t(x_{t}),\\
  &= \fln_{1:t-1}(x_t) + R'_{1:t}(x_t) - \fln_{1:t}(\xti) - R'_{1:t}(\xti) +  f_t(x_t) ,\\
  &= (\fln_{1:t-1}(x_t) +  R'_{1:t}(x_t)  + f_t(x_t)) - (\fln_{1:t}(\xti) + R'_{1:t}(\xti) ). 
\end{align*}
For the terms containing $\xti$, using the fact that $\fln_t(\xti) = f(\xti)$,
we have
\begin{equation} \label{eq:xti_as_h}
\fln_{1:t}(\xti) + R'_{1:t}(\xti)  = \fln_{1:t-1}(\xti) + R'_{1:t}(\xti)  + f_t(\xti).
\end{equation}
For a fixed $t$, we define two helper functions $\hha$ and $\hhb$.  Let 
\[ \hhb(x) = \fln_{1:t-1}(x) + R_{1:t}(x) + \alpha_{1:t} \Psi(x) + f_t(x),\] 
so
$\Delta_t =  \hhb(x_t) - \hhb(\xti).$
Define
\begin{align*}
 \hha(x) &= \fln_{1:t-1}(x) + R_{1:t-1}(x) + \alpha_{1:t-1} \Psi(x).
\end{align*}
Then we can write
\[\hhb(x) = \hha(x) + f_t(x) + R_t(x) + \alpha_t \Psi(x).\]
By definition of our updates, $\xt = \argmin_x \hha(x)$
(using Eq.~\eqref{eq:primeupdate}) and $\xti = \argmin_x \hhb(x)$.

Now, suppose we choose regularization $R_t(x) = \h\smnorm{Q_t^\h (x -
  y_t)}^2$.  The remainder of the proof is accomplished by bounding
$\hhb(x_t) - \hhb(\xti)$, with the aid of two lemmas (stated and
proved below).  First, by expanding $\hha$ and dropping constant terms
(which cancel from $\Delta_t$), we have
\begin{align*}
\hha(x) 
   &= \h x^\tp Q_{1:t-1} x  + \Big(g'_{1:t-1} - \h \sum_{s=1}^{t-1} Q_s y_s\Big) \cdot x  + \alpha_{1:t-1}\Psi(x)   \\
   &= \hnorm{Q_{1:t-1}^\h (x - \xt)}^2 + \hp(x) + k_t' && \text{Lemma~\ref{lem:rewrite}}
\end{align*}
for some constant $k_t' \in \R$.  Recall $Q_{1:t-1}^\h = (Q_1 + \dots
+ Q_{t-1})^\h$.  Now, we can apply Lemma~\ref{lem:newQ}.  The constant
$k_t'$ cancels out, and we take $\Qa = Q_{1:t-1}$, $\Qb = Q_t$, $\hpa
= \hp$, $\hpb = \alpha_t \Psi$, $\pxa = x_t$, etc.  Thus, letting $d_t
= y_t - x_t$,
\begin{align}
\Delta_t &= \hha(x_t) - \hhb(\xti) \notag \\
 &\leq (g_t - \h \wg_t)^\tp \Qtt^{-1} \wg_t 
   +\hnorm{\Qtt\invh(Q_t d_t)}^2 - g \Qtt\inv Q_t d_t
   + \alpha_t \Psi(x_t) - \alpha_t \Psi(\xti).
\label{eq:dtb}
\end{align}

We now re-incorporate the $-R'_t(x_t)$ terms not included in the
definition of $\Delta_t$.  Note $R'_t(x) \geq R_t(x)$, and $R_t(x_t) =
\hnorm{Q_t^\h d_t}^2$.  Then
\begin{align*}
\hnorm{\Qtt\invh (Q_t d_t)}^2 - \hnorm{Q_t^\h d_t}^2 
&= \h d_t^\tp Q_t^\tp \Qtt\inv Q_t d_t - \h d_t^\tp Q_t d_t \\
&\leq \h d_t^\tp Q_t^\tp Q_t\inv Q_t d_t - \h d_t^\tp Q_t d_t = 0
\end{align*}
where we have used the fact that $\Qtt \mge Q_t \mge 0$ implies
$Q_t\inv \mge \Qtt\inv \mge 0$.  Combining this result with
Eq.~\eqref{eq:dtb} and adding back the $R_{1:t}(\xs)$ term gives
\[ \Regret \leq R_{1:t}(\xs) 
      + \sum_{t=1}^T  \Big((g_t - \h \wg_t)^\tp \Qtt^{-1} \wg_t  - g_t \Qtt\inv Q_t (y_t - x_t)
      +  \alpha_t \Psi(x_t) - \alpha_t \Psi(\xti)\Big).
\]
Defining $\alpha_{T+1} = 0$, observe
\begin{align*}
\sum_{t=1}^T \alpha_t \Psi(x_t) - \alpha_t \Psi(\xti) 
   & = \sum_{t=1}^T\alpha_t \Psi(x_t) - \alpha_{t+1} \Psi(\xti) + (\alpha_{t+1} - \alpha_{t}) \Psi(\xti)  \\
   & \leq \sum_{t=1}^T\alpha_t \Psi(x_t) - \alpha_{t+1} \Psi(\xti)  \\
   & = \alpha_1 \Psi(x_1) - \alpha_{T+1} \Psi(x_{T+1})  = 0 
\end{align*}
where the inequality uses the fact that $0 \leq \alpha_{t+1} \leq
\alpha_t$ and $\Psi(x) \geq 0$.  The last equality follows from
$\Psi(x_1) = \Psi(0) = 0$ and $\alpha_{T+1}=0$.  Thus we conclude
\[ \Regret \leq R_{1:t}(\xs) 
      + \sum_{t=1}^T (g_t - \h \wg_t)^\tp \Qtt^{-1} \wg_t  - g_t^\tp \Qtt\inv Q_t (y_t - x_t).
\]

The second inequality in the theorem statement follows from
Equation~\eqref{eq:implbtr} of Lemma~\ref{lem:newQ}.
\end{myproof} 

Only $R_{1:t}(\xs)$ and the last term in the bound depend on the
center of the regularization $y_t$; the final term can either increase
or decrease regret, depending on the relationship between $g_t$ and
$y_t - x_t$ (note $g_t$ is not known when $y_t$ is selected).  If we
consider the simple case where all $Q_t = \sigma_t I$, observe that if
$-g_t \cdot(y_t - x_t) > 0$ then (roughly speaking) both the new
regularization penalty and the gradient of the loss function are
pulling $x_{t+1}$ away from $x_t$ in the same direction, and so regret
from this term will be larger.

\begin{myproof}{Proof of Corollary~\ref{cor:main}}
  We first consider FTPRL.   Let $Q_t = \sigma_tI$, and define $\sigma_t$ such that
  $\sigma_{1:t} = \ifrac{G\sqrt{2t}}{D}$.
  Then taking Theorem~\ref{thm:genmain} with $x_t = y_t$ gives
  \begin{align*}
    \Regret 
      &\leq \sum_{t=1}^T \frac{\sigma_t}{2} \norm{\xs - x_t}^2 + \sum_{t=1}^T \frac{{g_t}^2}{2 \sigma_{1:t}} \\
      &\leq \frac{\sigma_{1:T}}{2}D^2 + \sum_{t=1}^T \frac{G^2}{2 \sigma_{1:t}} \\
      &= \frac{GD\sqrt{2T}}{2} + \frac{GD}{2 \sqrt{2}}\sum_{t=1}^T \frac{1}{\sqrt{t}} \\
      &\leq DG\sqrt{2T},
  \end{align*}
  where the last inequality uses the fact that $\sum_{t=1}^T\frac{1}{\sqrt{t}} \leq 2\sqrt{T}$. 

  Recall the characterization of implicit-update mirror descent from
  Section~\ref{sec:equiv}.  Thus, in this case we have $\fu_t(x)
  \leftarrow \fw_t(x) + \IF(x)$.  Let $g^w_t = \grad \fw_t(\xt)$, so
  in the analysis we have $g_t^u = g^w_t + \phi_t$.  Following standard
  arguments, e.g.~\cite{bartlett07adaptive,duchi10composite}, it is
  straightforward to use the Pythagorean theorem for Bregman
  divergences to show $\hnorm{g^w_t}^2 \geq \hnorm{g^u_t}^2$, and then the
  result follows as for FTPRL.

  For regularized dual averaging we have $y_t = 0$.  Again let $Q_t =
  \sigma_tI$, and define $\sigma_t$ such that $\sigma_{1:t} =
  \ifrac{2G\sqrt{2t}}{D}.$ Then, Theorem~\ref{thm:genmain} gives
  \[\Regret 
      \leq \sum_{t=1}^T \frac{\sigma_t}{2} \norm{\xs}^2 + \sum_{t=1}^T \frac{{g_t}^2}{2 \sigma_{1:t}} 
           - \frac{\sigma_t}{\sigma_{1:t}} g_t \cdot x_t.
  \]
  The proof is largely similar to that for FTPRL, but we must deal
  with an extra term.  First, note for $t \geq 2$,
  \[ \sigma_t = \sigma_{1:t} - \sigma_{1:t-1} 
  = \frac{2G\sqrt{2}}{D}(\sqrt{t} - \sqrt{t-1})
  \leq \frac{2G\sqrt{2}}{D}\left(\frac{1}{2\sqrt{t-1}}\right)
  \leq \frac{2G}{D\sqrt{t}},
  \]
  where we have used $\sqrt{t} - \sqrt{t-1} \leq \frac{1}{2\sqrt{t-1}}$ and for $t \geq 2$, $1/\sqrt{t-1} \leq \sqrt{2}/{\sqrt{t}}$.
  Then, noting the term for $t=1$ is zero since $x_1 = 0$,
  \begin{align*}
    \sum_{t=1}^T-\frac{\sigma_t}{\sigma_{1:t}} g_t \cdot x_t 
    \leq  G D \sum_{t=2}^T\frac{\sigma_t}{\sigma_{1:t}} 
    \leq  G D \sum_{t=2}^T\frac{D}{2G\sqrt{2t}}  \frac{2G}{D\sqrt{t}} 
    \leq  \frac{G D}{\sqrt{2}} \sum_{t=2}^T\frac{1}{t}  
    \leq  \frac{G D}{\sqrt{2}} (\ln T + 1).  
  \end{align*}
Applying this observation, 
  \begin{align*}
    \Regret 
      &\leq \sum_{t=1}^T \frac{\sigma_t}{2} \norm{\xs}^2 + \sum_{t=1}^T \frac{{g_t}^2}{2 \sigma_{1:t}} 
         - \frac{\sigma_t}{\sigma_{1:t}} g_t \cdot x_t \\
      &\leq \frac{\sigma_{1:T}}{2} \left(\frac{D}{2}\right)^2 
         + \sum_{t=1}^T \frac{G^2}{2 \sigma_{1:t}} +  \frac{G D}{\sqrt{2}} \ln T + \BO(1) \\
      &= \frac{GD\sqrt{2T}}{4} + \frac{GD}{2 \sqrt{2}}\sum_{t=1}^T \frac{1}{\sqrt{t}} 
         +  \frac{G D}{\sqrt{2}} \ln T + \BO(1) \\
      &\leq \frac{1}{2}DG\sqrt{2T} +  \frac{G D}{\sqrt{2}} \ln T + \BO(1).
  \end{align*}
\end{myproof}
We now prove the two lemmas used in bounding the $\hha(x_t) -
\hhb(\xti)$ terms in the proof of Theorem~\ref{thm:genmain}.

\begin{lemma}\label{lem:rewrite}
Let $\Psi$ be a convex function defined on $\R^n$, and let $Q \in
\Snpp$.  Define
\[ h(x) = \frac{1}{2} x^\tp Q x + b \cdot x + \Psi(x),\]
and let $x^* = \argmin_x h(x)$.
 Then, we can rewrite $h$ as
\[ h(x) = \frac{1}{2}\norm{Q^\h(x - x^*)}^2 + \hp(x) + k,\]
where $k \in \R$ and $\hp$ is convex with $0 \in \partial\hp(x^*)$.
\end{lemma}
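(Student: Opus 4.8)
\emph{Proof plan.} The plan is to complete the square in the quadratic part of $h$ and absorb the leftover linear term into a modified penalty $\hp$. Using that $Q$ (hence $Q^\h$) is symmetric, expand $\h\norm{Q^\h(x - x^*)}^2 = \h x^\tp Q x - x^\tp Q x^* + \h (x^*)^\tp Q x^*$, so that $\h x^\tp Q x = \h\norm{Q^\h(x - x^*)}^2 + x^\tp Q x^* - \h (x^*)^\tp Q x^*$. Substituting this into the definition of $h$ suggests setting
\[
 \hp(x) = \Psi(x) + (b + Q x^*)\cdot x, \qquad k = -\h (x^*)^\tp Q x^*,
\]
after which a one-line expansion confirms $\h\norm{Q^\h(x - x^*)}^2 + \hp(x) + k = \h x^\tp Q x + b\cdot x + \Psi(x) = h(x)$, exactly as claimed. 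Since $\hp$ is $\Psi$ plus an affine function, it is convex, so every part of the conclusion except the subgradient condition is immediate from this algebra.

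To obtain $0 \in \partial\hp(x^*)$ I would invoke part A of Theorem~\ref{thm:std} with $R(x) = \h x^\tp Q x + b\cdot x$ (strongly convex because $Q \in \Snpp$) and $\Phi = \Psi$. Since $x^*$ minimizes $R + \Psi = h$, that theorem supplies a $\phi^* \in \partial\Psi(x^*)$ with $\grad R(x^*) + \phi^* = 0$, i.e. $\phi^* = -(Q x^* + b)$. Because $\partial\hp(x^*) = \partial\Psi(x^*) + (b + Q x^*)$ (the subdifferential of a sum with a differentiable affine term), we conclude $0 = \phi^* + (b + Q x^*) \in \partial\hp(x^*)$, which completes the argument.

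I do not anticipate any real obstacle: the proof is just completing the square together with the first-order optimality condition for $x^*$, which is precisely what Theorem~\ref{thm:std} packages. The only points needing a little care are using the symmetry of $Q$ when expanding the square so that the cross term appears as $x^\tp Q x^*$, and correctly identifying $\partial\hp(x^*)$ as the translate of $\partial\Psi(x^*)$ by the gradient $b + Q x^*$ of the added affine term.
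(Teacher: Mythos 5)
Your proof is correct and follows essentially the same route as the paper's: both invoke the optimality condition from Theorem~\ref{thm:std} to extract $\phi^* = -(Qx^*+b) \in \partial\Psi(x^*)$, define the same $\hp(x) = \Psi(x) - \phi^*\cdot x$ and the same constant $k = -\h(x^*)^\tp Q x^*$, and verify the identity by completing the square. The only difference is cosmetic ordering of the algebra, so there is nothing to add.
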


\begin{proof}
  Since $Q\in \Snpp$, the function $\frac{1}{2} x^\tp Q x$ is strongly
  convex, and so using Theorem~\ref{thm:std}, $h$ has a unique
  minimizer $x^*$ and there exists a (unique) $\phi$ such that
  \begin{equation} \label{eq:phidef}
    Qx^* + b + \phi = 0
  \end{equation}
  with $\phi \in \partial \Psi(x^*)$.  Define $\hp(x) = \Psi(x) - \phi
  \cdot x$, and note $0 \in \partial \hp(x^*)$.  Then,
  \begin{align*}
    h(x) 
    =& \frac{1}{2} x^\tp Q x + b \cdot x + \Psi(x)\\
    =& \frac{1}{2} x^\tp Q x + (b + \phi) \cdot x + \hp(x) 
           && \text{Defn. $\hp(x)$}\\
    =& \frac{1}{2} x^\tp Q x - x^\tp Q x^* + \hp(x)
    && \text{Eq.~\eqref{eq:phidef}} \\
    =& \frac{1}{2} \norm{Q^{\frac{1}{2}}(x - x^*)}^2 + \hp(x)
    - \frac{1}{2} \norm{Q^{\frac{1}{2}}x^*}^2, 
  \end{align*}
  where $\hp$ and $k = -\frac{1}{2} \norm{Q^{\frac{1}{2}}x^*}^2$
  satisfy the requirements of the theorem.
\end{proof}

\begin{lemma}\label{lem:newQ}
  Let $\xa \in \R^n$, let $\hpa$ be a convex function such that $0
  \in \partial \hpa(\xa)$, and let $\Qa \in \Snpp$.  Define
\[ 
  \hha(x) = \hnorm{\Qah (x - \xa)}^2 + \hpa(x),
\]
so $\xa = \argmin_x \hha(x)$.  Let $f$ and $\hpb$ be convex functions,
let $\Qb \in \Snp$, and define
\[ 
  \hhb(x) = \hha(x) + f(x) + \hnorm{\Qbh(x - \yc)}^2 + \hpb(x).
\] 
Let $\xb = \argmin_x \hhb(x)$, let $g \in \partial f(\xa)$, let $d =
\yc - \xa$, and let $\Qab = \Qa + \Qb$.  Then, there exists a certain
subgradient $\wg$ of $f$ such that
\begin{equation} \label{eq:pdc}
 \hhb(\pxa) - \hhb(\pxb) \leq 
  \big(g - \h \wg\big)^\tp \Qab\inv \wg  
    + \hnorm{\Qab\invh(\Qb d)}^2 - g^\tp \Qab\inv\Qb d
    + \hpb(\pxa) - \hpb(\pxb) 
\end{equation}
Further, 
\begin{equation} \label{eq:implbtr}
  \big(g - \h \wg\big)^\tp \Qab\inv \wg \leq \h g^\tp \Qab\inv g  - \delta 
\end{equation}
where $\delta \geq 0$.
\end{lemma}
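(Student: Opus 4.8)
The plan is to work directly with the optimality conditions for $x_1$ and $x_2$ and expand the quadratics. First I would record what the hypotheses give us. Since $0 \in \partial \hpa(\xa)$ and $\hha$ has the quadratic $\hnorm{\Qah(x - \xa)}^2$ whose gradient vanishes at $\xa$, we indeed have $\xa = \argmin \hha$. For $\xb$, applying Theorem~\ref{thm:std}(B) to $\hhb$ (the strongly convex quadratic piece comes from $\Qa \in \Snpp$) gives a subgradient $\wg \in \partial f(\xb)$ and $\phi_1 \in \partial\hpa(\xb)$, $\phi_2 \in \partial\hpb(\xb)$ such that the gradient of everything at $\xb$ is zero:
\[
  \Qa(\xb - \xa) + \Qb(\xb - \yc) + \wg + \phi_1 + \phi_2 = 0.
\]
Actually the cleaner route is: by Theorem~\ref{thm:std}(B), $\xb = \argmin_x \hha(x) + \hnorm{\Qbh(x-\yc)}^2 + \hpb(x) + \wg\cdot x$ for a suitable $\wg \in \partial f(\xb)$; take gradients/subgradients of that, using $0\in\partial\hpa(\xa)$ only to pin down the $\hha$ piece via Lemma~\ref{lem:rewrite}-style reasoning. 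The upshot I want is a closed-ish form: $\xb - \xa = -\Qab\inv(\wg + \Qb d + (\text{subgradient slack from }\hpa,\hpb))$, where $d = y - \xa$.

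Next I would bound $\hhb(\xa) - \hhb(\xb)$. Write $\hhb(x) = \hha(x) + \hnorm{\Qbh(x-y)}^2 + f(x) + \hpb(x)$. For the pieces: $\hha(\xa) - \hha(\xb) \le -\hnorm{\Qah(\xb - \xa)}^2 + (\hpa(\xa) - \hpa(\xb))$ using that $\xa$ minimizes the quadratic part exactly and convexity of $\hpa$ with $0 \in \partial\hpa(\xa)$ (so $\hpa(\xa) \le \hpa(\xb)$, hence that difference is $\le 0$ and can be dropped). For $f$: since $\wg \in \partial f(\xb)$, $f(\xa) - f(\xb) \le \wg^\tp(\xa - \xb) = \wg^\tp(\xa-\xb)$; but I actually want to relate this to $g \in \partial f(\xa)$, so I would instead use $f(\xa) - f(\xb) \le g^\tp(\xa - \xb)$ is the wrong direction — rather use $f(\xb) \ge f(\xa) + g^\tp(\xb-\xa)$, i.e. $f(\xa) - f(\xb) \le -g^\tp(\xb - \xa) = g^\tp(\xa-\xb)$. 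Then substitute $\xb - \xa$ from the optimality condition, expand the $\Qb$-quadratic term $\hnorm{\Qbh(\xa-y)}^2 - \hnorm{\Qbh(\xb-y)}^2$ as a difference of quadratics in $(\xb - \xa)$ and $d$, collect the $\Qab\inv$ terms, and match against the right-hand side of \eqref{eq:pdc}. The $\hpb(\xa) - \hpb(\xb)$ term is carried along untouched (it appears verbatim on the RHS). The bookkeeping here is where the $\wg$ versus $g$ asymmetry produces the $(g - \h\wg)^\tp \Qab\inv \wg$ shape: one factor of $\wg$ comes from the substituted $\xb - \xa$, and the $g$ comes from the first-order lower bound on $f$, with the $\h$ arising from the standard ``quadratic upper bound minus the quadratic you get for free'' cancellation.

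For the second inequality \eqref{eq:implbtr}: expand $(g - \h\wg)^\tp \Qab\inv \wg = \h g^\tp \Qab\inv g - \h(g-\wg)^\tp\Qab\inv(g-\wg)$; one checks this identity by writing $g^\tp\Qab\inv\wg = \h g^\tp\Qab\inv g + \h \wg^\tp\Qab\inv\wg - \h(g-\wg)^\tp\Qab\inv(g-\wg)$ and $\h\wg^\tp\Qab\inv\wg$ on the LHS. So with $\delta = \h(g-\wg)^\tp\Qab\inv(g-\wg)$ we get exactly \eqref{eq:implbtr}, and $\delta \ge 0$ since $\Qab = \Qa + \Qb \in \Snpp$ (as $\Qa \in \Snpp$, $\Qb \in \Snp$), so $\Qab\inv \in \Snpp$. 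When $\wg = g$ (trivial implicit update) $\delta = 0$; otherwise it is strictly positive, which is the claimed one-step advantage.

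The main obstacle I expect is the first part: correctly extracting $\wg$ as a genuine subgradient of $f$ at $\xb$ via Theorem~\ref{thm:std}(B) while simultaneously keeping the $\hpa$ and $\hpb$ subgradient terms under control, and then doing the quadratic expansion of $\hhb(\xa) - \hhb(\xb)$ cleanly enough that the $-g^\tp\Qab\inv\Qb d$ and $\hnorm{\Qab\invh(\Qb d)}^2$ terms fall out with the right signs (the cross terms between the $\wg$-direction and the $d$-direction must combine correctly). The identity-level step \eqref{eq:implbtr} is routine algebra once \eqref{eq:pdc} is in hand.
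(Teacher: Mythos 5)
There is a genuine gap, and it sits exactly where you flag your ``main obstacle.'' Your plan takes $\wg \in \partial f(\xb)$ from Theorem~\ref{thm:std}(B) applied to the full objective $\hhb$. But the stationarity condition at $\xb$ then necessarily involves subgradients $\phi_1 \in \partial \hpa(\xb)$ and $\phi_2 \in \partial \hpb(\xb)$: it reads $\Qab(\xb - \xa) - \Qb d + \wg + \phi_1 + \phi_2 = 0$, so the closed form you substitute is $\xb - \xa = \Qab\inv\big(\Qb d - (\wg + \phi_1 + \phi_2)\big)$. Carrying out the expansion you describe (dropping $\hpa(\xa) - \hpa(\xb) \le 0$, using $f(\xa)-f(\xb) \le -g^\tp(\xb-\xa)$, completing the square in the $\Qb$ term) then produces
\[
  \big(g - \h v\big)^\tp \Qab\inv v + \hnorm{\Qab\invh(\Qb d)}^2 - g^\tp \Qab\inv \Qb d + \hpb(\xa) - \hpb(\xb),
  \qquad v = \wg + \phi_1 + \phi_2,
\]
i.e.\ the quadratic form is in the combined vector $v$, not in a subgradient of $f$, and the $\phi_1,\phi_2$ contributions do not cancel and cannot simply be discarded (they enter with indefinite sign through the cross terms). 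Indeed the lemma's $\wg$ is in general \emph{not} a subgradient of $f$ at $\xb$; the paper explicitly notes that $\wg_t \in \partial f_t(x_{t+1})$ only when all $\alpha_t = 0$. The missing idea is the paper's auxiliary problem: define $\wh_2(x) = \hnorm{\Qah(x-\xa)}^2 + \hnorm{\Qbh(x-\yc)}^2 + f(x)$ \emph{without} the $\hpa,\hpb$ terms, let $\wx_2$ be its minimizer, and take $\wg \in \partial f(\wx_2)$, so the stationarity condition is clean, $\wx_2 - \xa = \Qab\inv(\Qb d - \wg)$, and the expansion yields exactly the right-hand side of Eq.~\eqref{eq:pdc} for $\wh_2(\xa) - \wh_2(\wx_2)$. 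One then re-attaches the non-smooth terms by the one-line comparison $\hhb(\xb) \ge \wh_2(\wx_2) + \hpa(\xa) + \hpb(\xb)$ (using $\wh_2(\xb) \ge \wh_2(\wx_2)$ and $\hpa(\xb) \ge \hpa(\xa)$ from $0 \in \partial\hpa(\xa)$), which gives $\hhb(\xa) - \hhb(\xb) \le \wh_2(\xa) - \wh_2(\wx_2) + \hpb(\xa) - \hpb(\xb)$. Without this detour through $\wx_2$, your route does not deliver a $\wg$ that is a subgradient of $f$, which is what the lemma asserts and what Theorem~\ref{thm:genmain} later relies on.

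Your treatment of Eq.~\eqref{eq:implbtr} is correct and is in fact an exact identity, $(g - \h\wg)^\tp\Qab\inv\wg = \h g^\tp\Qab\inv g - \h(g-\wg)^\tp\Qab\inv(g-\wg)$, giving $\delta = \h(g-\wg)^\tp\Qab\inv(g-\wg) \ge 0$; this is a clean alternative to the paper's argument. Note, however, that the paper instead defines $2\delta$ as the improvement $\wh_2(\lx_2) - \wh_2(\wx_2)$ of the implicit solution over the linearized one and derives the inequality from the two optimality conditions; that choice is what supports the later interpretation of $\delta_t$ as one-half the objective improvement of the implicit update, so your $\delta$, while valid for the stated inequality, carries a different meaning.
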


As we will see in the proof, $\delta > 0$ when the implicit update is
non-trivial.

\begin{proof}
  To obtain these bounds, we first analyze the problem without the
  $\Phi$ terms.  For this purpose, we define
  \[
  \wh_2(x) = \hnorm{\Qah (x - \xa)}^2 + \hnorm{\Qbh(x - \yc)}^2 +  f(x),
  \]
  and let  $\wx_2 = \argmin_x \wh_2(x)$.
  We can re-write
  \begin{align*}
    \wh_2(x) 
    &= f(x) + \hnorm{\Qah(x - \xa)}^2  + \hnorm{\Qbh(x - \xa - d)}^2 \\
    &= f(x) + \hnorm{\Qabh (x - \xa)}^2   - d^\tp \Qb (x - \xa) + \hnorm{\Qbh d}^2.
  \end{align*}
  Then, using Theorem~\ref{thm:std} on the last expression, there exists a $\wg \in \partial f(\wx_2)$ such that
  $\wg + \Qab(\wx_2 - \xa) - \Qb d = 0,$
  and so in particular
  \begin{equation}\label{eq:optz}
    \wx_2 - \xa = \Qab\inv (\Qb d -\wg).
  \end{equation}
  Then,
  \begin{align*}
    \wh_2(\xa)& - \wh_2(\wx_2) \\
    &= f(\xa) + \hnorm{\Qbh d}^2  - f(\wx_2) - \hnorm{\Qabh (\wx_2 - \xa)}^2   + d^\tp \Qb(\wx_2 - \xa) - \hnorm{\Qbh d}^2 \\
    &= f(\xa) - f(\wx_2) - \hnorm{\Qabh (\wx_2 - \xa)}^2  + d^\tp \Qb(\wx_2 - \xa),
    \intertext{and since $f(\wx_2) \geq f(\xa) + g(\wx_2 - \xa)$ implies $f(\xa) - f(\wx_2) \leq  -g (\wx_2 - \xa)$,}
    &\leq - \hnorm{\Qabh (\wx_2 - \xa)}^2  + (\Qb d - g)^\tp (\wx_2 - \xa)  \\
    \intertext{and applying Eq.~\eqref{eq:optz},}
    &= - \hnorm{\Qab\invh (\Qb d -\wg)}^2  + (\Qb d - g)^\tp (\Qab\inv (\Qb d -\wg))  \\
    &= g^\tp \Qab\inv \wg - \hnorm{\Qab\invh \wg}^2 +\hnorm{\Qab\invh(\Qb d)}^2 - g^\tp \Qab\inv\Qb d,\\
  \end{align*}
  and so we conclude 
  \begin{equation}\label{eq:whb}
    \wh_2(\xa) - \wh_2(\wx_2) \leq \big(g - \h \wg)^\tp \Qab\inv \wg  +\hnorm{\Qab\invh(\Qb d)}^2 - g^\tp \Qab\inv\Qb d.
  \end{equation}

  Next, we quantify the advantage offered by implicit updates.
  Suppose we choose $\wx_2$ by optimizing a version of $\wh_2$ where $f$ is linearized at $x_1$:
  \[
     \lh_2(x) = \hnorm{\Qah (x - \xa)}^2 + \hnorm{\Qbh(x - \yc)}^2 + g\cdot x.
  \]
  Let $\lx_2 = \argmin_x \lh_2(x)$.  We say the implicit update is
  non-trivial when $\wh_2(\wx_2) < \wh_2(\lx_2)$, that is, the
  implicit update provides a better solution to the optimization
  problem defined by $\wh_2$.  By definition $\wh_2(\wx_2) \leq
  \wh_2(\lx_2)$, and we can write $\wh_2(\wx_2) = \wh_2(\lx_2) -
  2\delta$ with $\delta \geq 0$.  Let $R_1(x) = \hnorm{\Qah (x -
    \xa)}^2$ and $R_2(x) = \hnorm{\Qbh(x - \yc)}^2$.  Then, by the
  definition of $\wx_2$ and $\lx_2$ we have
  \begin{align}
    R_{1:2}(\lx_2) + g \cdot \lx_2 &\leq R_{1:2}(\wx_2) + g \cdot \wx_2 \notag \\
    R_{1:2}(\wx_2) + \wg \cdot \wx_2 &= 
     R_{1:2}(\lx_2) + \wg \cdot \lx_2 - 2\delta \notag \\
    \intertext{and adding and canceling terms common to both sides gives}
    g \cdot \lx_2 + \wg \cdot \wx_2 &\leq 
        g \cdot \wx_2 + \wg \cdot \lx_2 - 2\delta. \label{eq:gwwl}
  \end{align}
  Following Equation~\eqref{eq:optz} $\wx_2 = \Qab\inv (\Qb d -\wg) +
  \xa$ or $\wx_2 = -\Qab\inv\wg + \kappa$ where $\kappa = \Qab\inv \Qb
  d + \xa$.  Similarly, $\lx_2 = -\Qab\inv g + \kappa$.  Plugging into
  Equation~\eqref{eq:gwwl}, and noting the $\kappa$ terms cancel, we have
  \[
    -g^\tp \Qab\inv g  - \wg^\tp \Qab\inv\wg \leq 
       -g^\tp \Qab\inv\wg - \wg^\tp \Qab\inv g  - 2\delta,
  \]
  or re-arranging and dividing by one-half,
  \begin{equation} \label{eq:iadv}
    \h g^\tp \Qab\inv g - \delta \geq (g - \h \wg)^\tp \Qab\inv\wg,
  \end{equation}

  We now consider the functions that include the $\Phi$ terms.  Note
  \[
     h_2(x_2) = \wh_2(x_2) + \hpa(x_2) + \hpb(x_2) 
         \geq \wh_2(\wx_2) + \hpa(x_1) + \hpb(x_2).
  \]
Then, 
\begin{align*}
 h_2(x_1) - h_2(x_2) 
    &= \wh_2(x_1) + \hpa(x_1) + \hpb(x_1) - h_2(x_2) \\
    &\leq \wh_2(x_1) + \hpa(x_1)  + \hpb(x_1) 
           - \wh_2(\wx_2) - \hpa(x_1) - \hpb(x_2)\\
    &= \wh_2(x_1) - \wh_2(\wx_2) + \hpb(x_1) - \hpb(x_2).
\end{align*}
Combining this fact with Equations~\eqref{eq:whb} and \eqref{eq:iadv}
proves the theorem.
\end{proof}

\section{Experiments with $L_1$ Regularization} \label{sec:exp}

We compare \fobos, \prox, and \orgn on a variety of datasets to
illustrate the key differences between the algorithms, from the point
of view of introducing sparsity with $L_1$ regularization.  
In all experiments we optimize log-loss (see Section~\ref{sec:algs}).
Since our goal here is to show the impact of the different choices of
regularization and the handling of the $L_1$ penalty, for simplicity
we use first-order updates rather than implicit updates for the
log-loss term.  

For an experimental evaluation of implicit updates, we refer the
reader to~\emcite{karampatziakis10importance}, which provides a
convincing demonstration of the advantages of implicit updates on both
importance weighted and standard learning problems.

\paragraph{Binary Classification}
We compare \prox, \orgn, and \fobos on several public datasets.  We
used four sentiment classification data sets (Books, Dvd, Electronics,
and Kitchen), available from~\cite{sentiment}, each with 1000 positive
examples and 1000 negative examples,\footnote{We used the features
  provided in processed\_acl.tar.gz, and scaled each vector of counts
  to unit length.  } as well as the scaled versions of the rcv1.binary 
(20,242 examples) and news20.binary (19,996 examples) data sets from
LIBSVM~\cite{libsvmdata}. 

All our algorithms use a learning rate scaling parameter $\gamma$ (see
Section~\ref{sec:algs}).  The optimal choice of this parameter can
vary somewhat from dataset to dataset, and for different settings of
the $L_1$ regularization strength $\lambda$.  For these experiments,
we first selected the best $\gamma$ for each (dataset, algorithm,
$\lambda$) combination on a random shuffling of the dataset.  We did
this by training a model using each possible setting of $\gamma$ from
a reasonable grid (12 points in the range $[0.3, 1.9])$, and choosing
the $\gamma$ with the highest online AUC.  We then fixed this value,
and report the average AUC over 5 different shufflings of each 
dataset.  We chose the area under the ROC curve (AUC) as our
accuracy metric as we found it to be more stable and have less
variance than the mistake fraction.  However, results for
classification accuracy were qualitatively very similar.

\begin{table*}[t!]
\begin{center}
\caption{AUC (area under the ROC curve) for online predictions and
  sparsity in parentheses.  The best value for each dataset is
  shown in bold.  For these experiments, $\lambda$ was fixed at $0.05/T$.}
\label{table}
\vspace{0.1in}
\begin{small}
\begin{sc}
\begin{tabular}{lcccccr}
\hline
          Data &                       FTRL-Proximal &                                 RDA &                               FOBOS \\
\hline
\hline
         books &                       0.874 (0.081) &     \textbf{0.878} (\textbf{0.079}) &                       0.877 (0.382) \\
           dvd &                       0.884 (0.078) &              0.886 (\textbf{0.075}) &              \textbf{0.887} (0.354) \\
   electronics &                       0.916 (0.114) &     \textbf{0.919} (\textbf{0.113}) &                       0.918 (0.399) \\
       kitchen &              0.931 (\textbf{0.129}) &              \textbf{0.934} (0.130) &                       0.933 (0.414) \\
          news &              0.989 (\textbf{0.052}) &              \textbf{0.991} (0.054) &                       0.990 (0.194) \\
          rcv1 &              0.991 (\textbf{0.319}) &              \textbf{0.991} (0.360) &                       0.991 (0.488) \\

\hline
web search ads &     \textbf{0.832} (\textbf{0.615}) &                       0.831 (0.632) &                       0.832 (0.849) \\
\hline
\end{tabular}
\end{sc}
\end{small}
\end{center}
\vspace{-0.1in}
\end{table*}

\paragraph{Ranking Search Ads by Click-Through-Rate}
We collected a dataset of about 1,000,000 search ad impressions from a
large search engine,\footnote{While we report results on a single
  dataset, we repeated the experiments on two others, producing
  qualitatively the same results.  No user-specific data was used in
  these experiments.} corresponding to ads shown on a small set of
search queries.  We formed examples with a feature vector $\theta_t$
for each ad impression, using features based on the text of the ad and
the query, as well as where on the page the ad showed.  The target
label $y_t$ is 1 if the ad was clicked, and -1 otherwise.

Smaller learning-rates worked better on this dataset; for each
(algorithm, $\lambda)$ combination we chose the best $\gamma$ from 9
points in the range $[0.03, 0.20]$.
Rather than shuffling, we report results for a single pass over the
data using the best $\gamma$, processing the events in the order the
queries actually occurred.  We also set a lower bound for the
stabilizing terms $\bs_t$ of 20.0, (corresponding to a maximum
learning rate of 0.05), as we found this improved accuracy somewhat.
Again, qualitative results did not depend on this choice.

\paragraph{Results}
Table~\ref{table} reports AUC accuracy (larger numbers are better),
followed by the density of the final predictor $x_T$ (number of
non-zeros divided by the total number of features present in the
training data).  We measured accuracy online, recording a prediction
for each example before training on it, and then computing the AUC for
this set of predictions.  For these experiments, we fixed $\lambda =
0.05/T$ (where $T$ is the number of examples in the dataset), which
was sufficient to introduce non-trivial sparsity.  Overall, there is
very little difference between the algorithms in terms of accuracy,
with RDA having a slight edge for these choices for $\lambda$.  
Our main point concerns the sparsity numbers.  It has been shown
before that RDA outperforms FOBOS in terms of sparsity.  The question
then is how does \prox perform, as it is a hybrid of the two,
selecting additional stabilization $R_t$ in the manner of \fobos, but
handling the $L_1$ regularization in the manner of \orgn.  These
results make it very clear: it is the treatment of $L_1$
regularization that makes the key difference for sparsity, as \prox
behaves very comparably to \orgn in this regard.

\begin{figure}[t!]
\begin{center}
\includegraphics[width=3.2in]{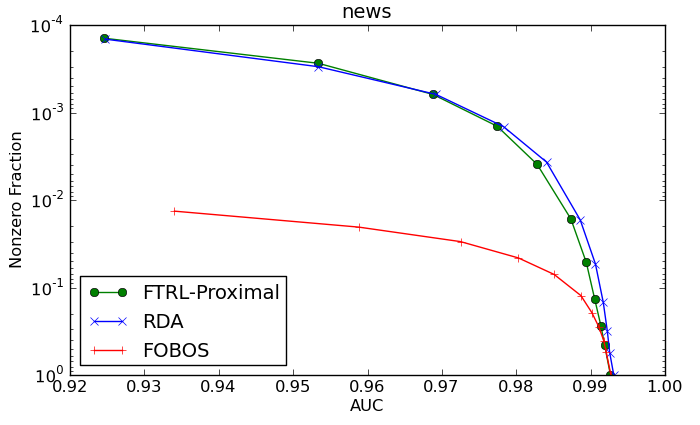}
\end{center}
\caption{ Sparsity versus accuracy tradeoffs on the 20 newsgroups
  dataset.  Sparsity increases on the y-axis, and AUC increases on the
  x-axis, so the top right corner gets the best of both worlds.
  \fobos is pareto-dominated by \prox and \orgn.}
\label{fig:pareto-news}
\end{figure}

\begin{figure}[t!]
\begin{center}
\includegraphics[width=3.2in]{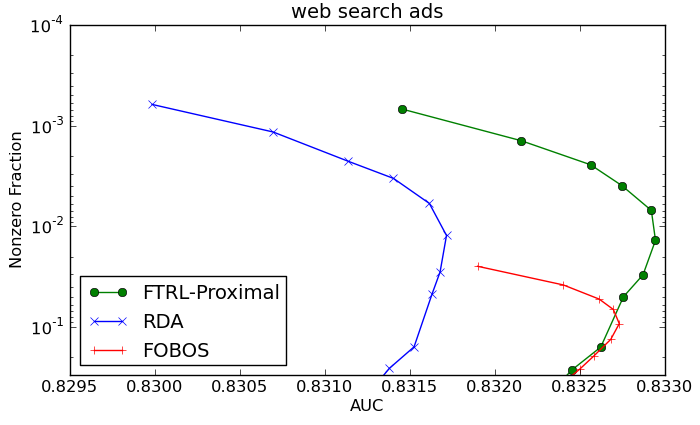}
\end{center}
\caption{ The same comparison as the previous figure, but on a large
  search ads ranking dataset.  On this dataset, \prox significantly
  outperforms both other algorithms.} \label{fig:pareto-ads}
\end{figure}

Fixing a particular value of $\lambda$, however, does not tell the
whole story.  For all these algorithms, one can trade off accuracy to
get more sparsity by increasing the $\lambda$ parameter.  The best
choice of this parameter depends on the application as well as the
dataset.  For example, if storing the model on an embedded device with
expensive memory, sparsity might be relatively more important.  To
show how these algorithms allow different tradeoffs, we plot sparsity
versus AUC for the different algorithms over a range of $\lambda$
values.  Figure~\ref{fig:pareto-news} shows the tradeoffs for the 20
newsgroups dataset, and Figure~\ref{fig:pareto-ads} shows the
tradeoffs for web search ads.

In all cases, \fobos is pareto-dominated by \orgn and \prox.  These
two algorithms are almost indistinguishable in the their tradeoff
curves on the newsgroups dataset, but on the ads dataset \prox
significantly outperforms \orgn as well.\footnote{The improvement is more
  significant than it first appears.  A simple model with only
  features based on where the ads were shown achieves an AUC of nearly
  0.80, and the inherent uncertainty in the clicks means that even
  predicting perfect probabilities would produce an AUC significantly
  less than 1.0, perhaps 0.85.}

\section{Conclusions and Open Questions}\label{sec:disc}

The goal of this work has been to extend the theoretical understanding
of several families of algorithms that have shown significant applied
success for large-scale learning problems.  We have shown that the
most commonly used versions of mirror descent, FTRL-Proximal and RDA
are closely related, and provided evidence that the non-smooth
regularization $\Psi$ is best handled globally, via RDA or
FTRL-Proximal.  Our analysis also extends these algorithms to implicit
updates, which can offer significantly improved performance for some
problems, including applications in active learning and
importance-weighted learning.

Significant open questions remain.  The observation that FOBOS
is using a subgradient approximation for much of the cumulative $L_1$
penalty while RDA and FTRL-Proximal handle it exactly provides a
compelling explanation for the improved sparsity produced by the
latter two algorithms.  Nevertheless, this is not a proof that these
two algorithms always produce more sparsity.  Quantitative bounds on
sparsity have proved theoretically very challenging, and any
additional results in this direction would be of great interest.

Similar challenges exist with quantifying the advantage offered by
implicit updates.  Our bounds demonstrate, essentially, a one-step
advantage for implicit updates: on any given update, the implicit
update will increase the regret bound by no more than the explicit
linearized update, and the inequality will be strict whenever the
implicit update is non-trivial.
However, this is insufficient to say that for any given learning
problem implicit updates will offer a better bound.  After one update,
the explicit and implicit algorithms will be at different feasible
points $x\ti$, which means that they will suffer different losses
under $f\ti$ and (more importantly) compute and store different
gradients for that function.

This issue is not unique to implicit updates: anytime the real loss
functions $f_t$ are non-linear, but the algorithm approximates them by
computing $g_t = \grad f_t(x_t)$, two different first-order algorithms
may see a different sequence of $g_t$'s; since tight regret bounds
depend on this sequence, the bounds will not be directly comparable.
Generally we assume the gradients are bounded, $\smnorm{g_t} \leq G$,
which leads to bounds like $\BO(G\sqrt{T})$, but since a large number
of algorithms obtain this bound, it cannot be used to discriminate
between them.  Developing finer-grained techniques that can accurately
compare the performance of different first-order online algorithms on
non-linear functions could be of great practical interest to the
learning community since the loss functions used are almost never
linear.

\section*{Acknowledgments}
The author wishes to thank Matt Streeter for numerous helpful
discussions and comments, and Fernando Pereira for a conversation that
helped focus this work on the choice $\Psi(x) =\norm{x}_1$.

\bibliography{equiv_and_implicit_long.bib}

\end{document}

com